\documentclass{article} 
\usepackage{iclr2018_conference,times}
\usepackage{hyperref}
\usepackage{url}

\usepackage{amsmath}
\usepackage{amsthm}
\usepackage{amssymb}
\usepackage{units}
\usepackage{graphicx}
\usepackage{caption}
\usepackage{amsfonts}
\usepackage{verbatim}
\usepackage[colorinlistoftodos]{todonotes}
\usepackage[linesnumbered]{algorithm2e}
\usepackage{algorithmic}

\newcommand{\ve}[1]{\boldsymbol{#1}}
\newcommand{\bbX}{\mathbf{X}}
\newcommand{\bbY}{\mathbf{Y}}
\newcommand{\bbw}{\mathbf{w}}

\newcommand{\bbYs}{\mathbf{Y}^*}

\newcommand{\bth}{\ve \theta}

\newcommand{\Ltwo}{\ell_2}
\newcommand{\Lone}{\ell_1}
\newcommand{\Lzero}{\ell_0}
\newcommand{\Eb}{E_{\bbX,\bbYs}(\bth)}  

\newcommand{\ddthetak}{\frac{\partial}{\partial \theta_k}}

\newcommand{\pn}[2]{p_{\mathcal{N}}\left({#1}\,\middle\vert\,{#2}\right)}

\newcommand{\ps}[1]{p_{\mathcal{S}}\left({#1}\right)}

\renewcommand{\d}{\mathrm d}
\newcommand{\ddt}{\frac{\partial}{\partial t}}

\newcommand{\bbc}{\mathbf{c}}

\newcommand{\bbxi}{\ve{\xi}}
\newcommand{\bbchi}{\ve{\chi}}
\newcommand{\btht}{\ve \theta^t}
\newcommand{\thk}{\theta_k}
\newcommand{\wiener}{\mathcal{W}}
\newcommand{\ddthetaki}{\frac{\partial}{\partial \theta_{k}}}
\newcommand{\ddthetaksq}{\frac{\partial^2}{\partial \theta_k^2}}

\newtheorem{thm}{Theorem}

\newtheorem{cor}{Corollary}
\newtheorem{lem}[cor]{Lemma}

\iclrfinalcopy
\newcommand{\deepr}{DEEP~R}
\newcommand{\softdeepr}{soft-DEEP~R}

\newcommand{\DavidAndGui}[1]{\textcolor{black}{#1}}
\newcommand{\Robert}[1]{\textcolor{black}{#1}}


\title{Deep Rewiring: Training very sparse deep networks}

\author{Guillaume Bellec, David Kappel, Wolfgang Maass \& Robert Legenstein  \\
Institute for Theoretical Computer Science\\
Graz University of Technology\\
Austria \\
\texttt{\{bellec,kappel,maass,legenstein\}@igi.tugraz.at}
}

\begin{document}

\maketitle

\begin{abstract}
Neuromorphic hardware tends to pose limits on the connectivity of deep networks that one can run on them. But also generic hardware and software implementations of deep learning run more efficiently for sparse networks. Several methods exist for pruning connections of a neural network after it was trained without connectivity constraints. We present an algorithm, DEEP R, that enables us to train directly a sparsely connected neural network. DEEP R automatically rewires the network during supervised training so that connections are there where they are most needed for the task, while its total number is all the time strictly bounded. We demonstrate that DEEP R can be used to train very sparse feedforward and recurrent neural networks on standard benchmark tasks with just a minor loss in performance. DEEP R is based on a rigorous theoretical foundation that views rewiring as stochastic sampling of network configurations from a posterior. 
\end{abstract}

\section{Introduction}
Network connectivity is one of the main determinants for whether a neural network can be efficiently implemented in hardware or simulated in software. For example, it is mentioned in \cite{jouppi2017datacenter} that in Google's tensor processing units (TPUs), weights do not normally fit in on-chip memory for neural network applications despite the small 8 bit weight precision on TPUs. Memory is also the bottleneck in terms of energy consumption in TPUs and FPGAs \citep{han2017ese,iandola2016squeezenet}. For example, for an implementation of a long short term memory network (LSTM), memory reference consumes more than two orders of magnitude more energy than ALU operations \citep{han2017ese}. The situation is even more critical in neuromorphic hardware, where either hard upper bounds on network connectivity are unavoidable \citep{schemmel2010wafer,merolla2014million} or fast on-chip memory of local processing cores is severely limited, for example the $96$ MByte local memory of cores in the SpiNNaker system \citep{furber2014spinnaker}. This implementation bottleneck will become even more severe in future applications of deep learning when the number of neurons in layers will increase, causing a quadratic growth in the number of connections between them.

Evolution has apparently faced a similar problem when evolving large neuronal systems such as the human brain, 
given that the brain volume is dominated by white matter, i.e., by connections between neurons. The solution found by evolution is convincing. Synaptic connectivity in the brain is highly dynamic in the sense that new synapses are constantly rewired, especially during learning \citep{HoltmaatETAL:05,StettlerETAL:06,attardo2015impermanence,ChambersRumpel:17}. In other words, rewiring is an integral part of the learning algorithms in the brain, rather than a separate process.

We are not aware of previous methods for simultaneous training and rewiring in artificial neural networks, so that they are able to stay within a strict bound on the total number of connections throughout the learning process. There are however several heuristic methods for pruning a larger network \citep{han_learning_2015,han_deep_2015,collins_memory_2014,yang_deep_2015,srinivas_data-free_2015}, that is, the network is first trained to convergence, and network connections and / or neurons are pruned only subsequently.
These methods are useful for downloading a trained network on neuromorphic hardware, but not for on-chip training. A number of methods have been proposed  that are capable of reducing connectivity during training \citep{collins_memory_2014,jin2016training,narang2017exploring}. However, these algorithms usually start out with full connectivity. Hence, besides reducing computational demands only partially, they cannot be applied when computational resources (such as memory) is bounded throughout training. 

Inspired by experimental findings on rewiring in the brain, we propose in this article deep rewiring (DEEP R), an algorithm that makes it possible to train deep neural networks under strict connectivity constraints. In contrast to many previous pruning approaches that were based on heuristic arguments, DEEP R is embedded in a thorough theoretical framework.
DEEP R is conceptually different from standard gradient descent algorithms in two respects. First, each connection has a predefined sign. Specifically, we assign to each connection $k$ a connection parameter $\theta_k$ and a constant sign $s_k \in \{-1, 1\}$. For non-negative $\theta_k$, the corresponding network weight is given by $w_k=s_k \theta_k$.
In standard backprop, when the absolute value of a weight is moved through $0$, it becomes a weight with the opposite sign. In contrast, in DEEP R a connection vanishes in this case ($w_k=0$), and a randomly drawn other connection is tried out by the algorithm.  
Second, in DEEP R, gradient descent is combined with a random walk in parameter space \citep{deFreitas2000sequential,welling2011bayesian}. This modification leads to important functional differences. 
\DavidAndGui{In fact, our theoretical analysis shows that DEEP R jointly samples network weights and the network architecture (i.e., network connectivity) from the posterior distribution, that is, the distribution that combines the data likelihood and a specific connectivity prior in a Bayes optimal manner.} As a result, the algorithm continues to rewire connections even when the performance has converged. We show that this feature enables DEEP R to adapt the network connectivity structure online when the task demands are drifting.

We show on several benchmark tasks that with DEEP R, the connectivity of several deep architectures --- fully connected deep networks, convolutional nets, and recurrent networks (LSTMs) --- can be constrained to be extremely sparse throughout training with a marginal drop in performance. In one example, a standard feed forward network trained on the MNIST dataset, we achieved good performance with $2$ \% of the connectivity of the fully connected counterpart.
We show that DEEP R reaches a similar performance level as state-of-the-art pruning algorithms where training starts with the full connectivity matrix.
If the target connectivity is very sparse (a few percent of the full connectivity), DEEP R outperformed these pruning algorithms.

\section{Rewiring in deep neural networks}

Stochastic gradient descent (SGD) and its modern variants \citep{kingma_adam:_2014,tieleman2012lecture} implemented through the Error Backpropagation algorithm is the dominant learning paradigm of contemporary deep learning applications. 
For a given list of network inputs $\bbX$ and target network outputs $\bbYs$, gradient descent iteratively moves the parameter vector $\bth$ in the direction of the negative gradient of an error function $\Eb$ such that a local minimum of $\Eb$ is eventually reached. 

A more general view on neural network training is provided by a probabilistic interpretation of the learning problem \citep{bishop2006pattern,neal1992bayesian}.  
In this probabilistic learning framework, the deterministic network output is interpreted as defining a probability distribution $\pn{\bbY}{\bbX,\bth}$ over outputs $\bbY$ for the given input $\bbX$ and the given network parameters $\bth$. The goal of training is then to find parameters that maximize the likelihood $\pn{\bbY^*}{\bbX,\bth}$ of the training targets under this model (maximum likelihood learning). Training can again be performed by gradient descent on an equivalent error function that is usually given by the negative log-likelihood \begin{math}\Eb = - \log \pn{\bbY^*}{\bbX,\bth}\end{math}.

Going one step further in this reasoning, a full Bayesian treatment adds prior beliefs about the network parameters through a prior distribution $\ps{\bth}$ (we term this distribution the structural prior for reasons that will become clear below) over parameter values $\bth$ and the training goal is formulated via the posterior distribution over parameters $\bth$. The training goal that we consider in this article is to produce sample parameter vectors which have a high probability under the posterior distribution $p^*(\bth \,|\, \bbX, \bbY^*) \propto \ps{\bth} \cdot \pn{\bbY^*}{\bbX,\bth}$. \DavidAndGui{More generally, we are interested in a target distribution $p^*(\bth) \propto p^*(\bth \,|\, \bbX, \bbY^*)^{\frac{1}{T}}$ that is a tempered version of the posterior
where $T$ is a temperature parameter. For $T=1$ we recover the posterior distribution, for $T>1$ the peaks of the posterior are flattened, and for $T<1$ the distribution is sharpened, leading to higher probabilities for parameter settings with better performance.}

\DavidAndGui{This training goal was explored by \cite{welling2011bayesian}, \cite{chen2016bridging}, and \cite{kappel_network_2015} where it was shown that gradient descent in combination with stochastic weight updates performs Markov Chain Monte Carlo (MCMC) sampling from the posterior distribution. In this paper we extend these results by (a) allowing the algorithm also to sample the network structure, and (b) including a hard posterior constraint on the total number of connections during the sampling process. We define the training goal as follows:} 
\begin{align}
\DavidAndGui{\text{produce samples $\bth$ with high probability in }}
\;p^*(\bth)\;=\;
\begin{cases}
0 \text{ if $\bth$ violates the constraint} \\
\frac{1}{\mathcal{Z}} p^*(\bth \,|\, \bbX, \bbY^*)^{\frac{1}{T}} \text{ otherwise,}
\end{cases}
\label{equ:produces_parameter}
\end{align}
\DavidAndGui{where $\mathcal{Z}$ is a normalizing constant. The emerging learning dynamics jointly samples from a posterior distribution over network parameters $\bth$ and constrained network architectures. In the next section we introduce the algorithm and in Section \ref{sec:convergence-DR} we discuss the theoretical guarantees.}

\paragraph*{The DEEP R algorithm:}
In many situations, network connectivity is strictly limited during training, for instance because of hardware memory limitations. Then the limiting factor for a training algorithm is the maximal connectivity ever needed during training.
DEEP R guarantees such a hard limit. 
DEEP R achieves the learning goal \eqref{equ:produces_parameter} on {\em network configurations}, that is, it not only samples the network weights and biases, but also the connectivity under the given constraints. This is achieved by introducing the following mapping from network parameters $\bth$ to network weights $\bbw$:   

A connection parameter $\theta_k$ and a constant sign $s_k \in \{-1, 1\}$ are assigned to each connection $k$.
If $\theta_k$ is negative, we say that the connection $k$ is {\em dormant}, and the corresponding weight is $w_k=0$.
Otherwise, the connection is considered {\em active}, and the corresponding weight is $w_k=s_k \theta_k$.
Hence, each $\bth_k$ encodes (a) whether the connection is active in the network, and (b) the weight of the connection if it is active. Note that we use here a single index $k$ for each connection / weight instead of the more usual double index that defines the sending and receiving neuron. This connection-centric indexing is more natural for our rewiring algorithms where the connections are in the focus rather than the neurons.
Using this mapping, sampling from the posterior over $\bth$ is equivalent to sampling from the posterior over network configurations, that is, the network connectivity structure and the network weights.

\begin{algorithm}
 \For{$i$ in $[1,N_{iterations}]$}{
  \For{all active connections $k$ $(\theta_k \geq 0)$}
  {$\theta_k \leftarrow \theta_k - \eta \frac{\partial}{\partial \theta_k} \Eb - \eta \alpha + \sqrt{2 \eta T} \ \nu_{k}$\;
  \textbf{if} $\theta_k < 0$ \textbf{then} set connection $k$ dormant \;
  }
  \While{number of active connections lower than $K$}{
   select a dormant connection $k'$ with uniform probability and activate it\;
   $\theta_{k'} \leftarrow 0$
   }
 }
 \caption{Pseudo code of the DEEP R algorithm. $\nu_{k}$ is sampled from a zero-mean Gaussian of unit variance independently for each active and each update step. Note that the gradient of the error $\Eb$ is computed by backpropagation over a mini-batch in practice.}
 \label{alg:deepr}
\end{algorithm}

DEEP R is defined in Algorithm \ref{alg:deepr}. 
Gradient updates are performed only on parameters of active connections (line 3). The derivatives of the error function $\frac{\partial}{\partial \theta_k} \Eb$ can be computed in the usual way, most commonly with the backpropagation algorithm. 
Since we consider only classification problems in this article, we used the cross-entropy error for the experiments in this article. The third term in line 3 ($- \eta \alpha$) is an $\Lone$ regularization term, but other regularizers could be used as well. 

A conceptual difference to gradient descent is introduced via the last term in line 3. Here,
noise $\sqrt{2 \eta T} \ \nu_{k}$ is added to the update, where the temperature parameter $T$ controls the amount of noise and $\nu_{k}$ is sampled from a zero-mean Gaussian of unit variance independently for each parameter and each update step. 
The last term alone would implement a random walk in parameter space. Hence, the whole line 3 of the algorithm implements a combination of gradient descent on the regularized error function with a random walk.
Our theoretical analysis shows that this random walk behavior has an important functional consequence, see the paragraph after the next for a discussion on the theoretical properties of DEEP R.

The rewiring aspect of the algorithm is captured in lines 4 and 6--9 in Algorithm \eqref{alg:deepr}. Whenever a parameter $\theta_k$ becomes smaller than $0$, the connection is set dormant, i.e., it is deleted from the network and no longer considered for updates (line 4). For each connection that was set to the dormant state, a new connection  $k'$ is chosen randomly from the uniform distribution over dormant connections, $k'$ is activated and its parameter is initialized to $0$. This rewiring strategy (a) ensures that exactly $K$ connections are active at any time during training (one initializes the network with $K$ active connections), and (b) that dormant connections do not need any computational demands except for drawing connections to be activated. Note that for sparse networks, it is efficient to keep only a list of active connections and none for the dormant connections. Then, one can efficiently draw connections from the whole set of possible connections and reject those that are already active.


\begin{figure}[t!]
\begin{center}
\includegraphics[width=\textwidth]{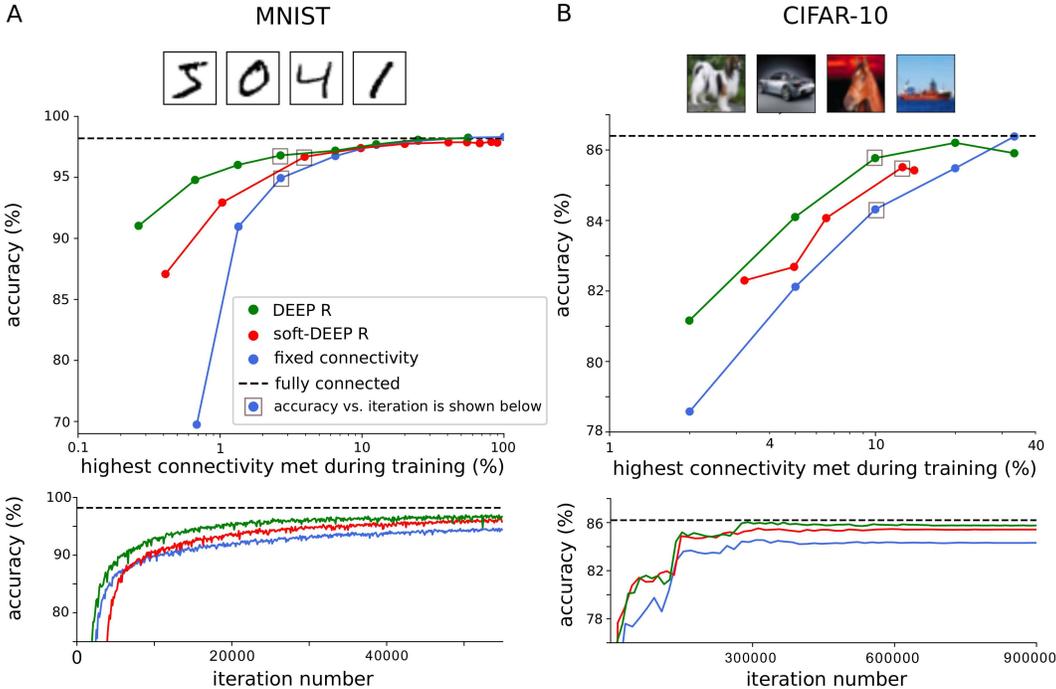}
\caption{\textbf{Visual pattern recognition with sparse networks during training.} 
Sample training images (top), test classification accuracy after training for various connectivity levels (middle) and example test accuracy evolution during training (bottom) for a standard feed forward network trained on MNIST (\textbf{A}) and a CNN trained on CIFAR-10 (\textbf{B}).
Accuracies are shown for various algorithms. Green: DEEP R; red: soft-DEEP R; blue: SGD with initially fixed sparse connectivity; dashed gray: SGD, fully connected. Since soft-DEEP R does not guarantee a strict upper bound on the connectivity, accuracies are plotted against the highest connectivity ever met during training (middle panels). Iteration number refers to the number of parameter updates during training.}
\label{fig:mnist_cifar}
\end{center}
\end{figure}

\section{Experiments}

\paragraph*{Rewiring in fully connected and in convolutional networks:}
We first tested the performance of DEEP R on MNIST and CIFAR-10. For MNIST, we considered a fully connected feed-forward network used in \cite{han_learning_2015} to benchmark pruning algorithms. It has two hidden layers of $300$ and $100$ neurons respectively and a 10-fold softmax output layer.
On the CIFAR-10 dataset, we used a convolutional neural network (CNN) with two convolutional followed by two fully connected layers. For reproducibility purposes the network architecture and all parameters of this CNN were taken from the official tutorial of Tensorflow.
On CIFAR-10, we used a decreasing learning rate and a cooling schedule to reduce the temperature parameter $T$ over iterations (see Appendix \ref{sec:meth} for details on all experiments).

For each task, we performed four training sessions. First, we trained a network with DEEP~R. 
In the CNN, the first convolutional layer was kept fully connected while we allowed rewiring of the second convolutional layer. Second, we tested another algorithm, soft-DEEP R, which is a simplified version of DEEP R that does however not guarantee a strict connectivity constraint (see Section \ref{sec:theory} for a description). Third, we trained a network in the standard manner without any rewiring or pruning to obtain a baseline performance. Finally, we trained a network with a connectivity that was randomly chosen before training and kept fixed during the optimization. The connectivity was however not completely random. Rather each layer received a number of connections that was the same as the number found by soft-DEEP R. The performance of this network is expected to be much better than a network where all layers are treated equally. 

Fig.~\ref{fig:mnist_cifar} shows the performance of these algorithms on MNIST (panel A) and on CIFAR-10 (panel B).
DEEP R reaches a classification accuracy of $96.2$ \% when constrained to $1.3$ \% connectivity.
To evaluate precisely the accuracy that is reachable with $1.0$ \% connectivity, we did an additional experiment where we doubled the number of training epochs. DEEP R reached a classification accuracy of $96.3\%$ (less than $2$ \% drop in comparison to the fully connected baseline).
Training on fixed random connectivity performed surprisingly well for connectivities around $10$ \%, possibly due to the large redundancy in the MNIST images. 
Soft-DEEP R does not guarantee a strict upper bound on the network connectivity.
When considering the maximum connectivity ever seen during training, soft-DEEP R performed consistently worse than DEEP R for  networks where this maximum connectivity was low.
On CIFAR-10, the classification accuracy of DEEP R was 84.1 \% at a connectivity level of $5$ \%. The performance of DEEP R at 20 \% connectivity was close to the performance of the fully connected network.

To study the rewiring properties of DEEP R, we monitored the number of newly activated connections per iteration (i.e., connections that changed their status from dormant to active in that iteration). We found that after an initial transient, the number of newly activated connections converged to a stable value and remained stable even after network performance has converged, see Appendix \ref{sec:rewiring}.

\paragraph*{Rewiring in recurrent neural networks:}

\begin{figure}[t!]
\includegraphics[width=\textwidth]{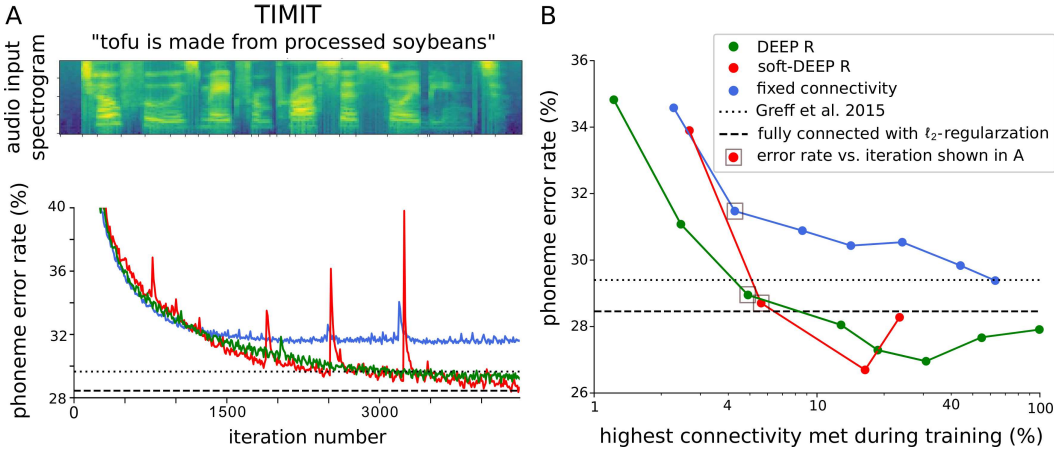}
\caption{
\textbf{Rewiring in recurrent neural networks.} 
Network performance for one example run (\textbf{A}) and at various connectivity levels (\textbf{B}) as in Fig.~\ref{fig:mnist_cifar} for an LSTM network trained on the TIMIT dataset with DEEP R (green), soft-DEEP R (red) and a network with fixed random connectivity  (blue). Dotted line: fully connected LSTM trained without regularization as reported in \cite{greff2017lstm}. Thick dotted line: fully connected LSTM with $\Ltwo$ regularization.}   
\label{fig:timit_masked}
\end{figure}

In order to test the generality of our rewiring approach, we also considered the training of recurrent neural networks with backpropagation through time (BPTT). Recurrent networks are quite different from their feed forward counterparts in terms of their dynamics. In particular, they are potentially unstable due to recurrent loops in inference and training signals. As a test bed, we considered an LSTM network trained on the TIMIT data set. In our rewiring algorithms, all connections were potentially available for rewiring, including connections to gating units. From the TIMIT audio data, MFCC coefficients and their temporal derivatives were computed and fed into a bi-directional LSTMs with a single recurrent layer of 200 cells followed by a softmax to generate the phoneme likelihood \citep{graves2005framewise}, see Appendix \ref{sec:meth}.

We considered as first baseline a fully connected LSTM with standard BPTT without regularization as the training algorithm. This algorithm performed similarly as the one described in \cite{greff2017lstm}. It turned out however that performance could be significantly improved by including a regularizer in the training objective. We therefore considered the same setup with $\Ltwo$ regularization (cross-validated). This setup achieved a phoneme error rate of $28.3$ \%. We note that better results have been reported in the literature using the CTC cost function and deeper networks \citep{graves_speech_2013}. For the sake of easy comparison however, we sticked here to the much simpler setup with a medium-sized network and the standard cross-entropy error function.

We found that connectivity can be reduced significantly in this setup with our algorithms, see Fig.~\ref{fig:timit_masked}. Both algorithms, DEEP R and soft-DEEP R, performed even slightly better than the fully connected baseline at connectivities around $10$ \%, probably due to generalization issues. DEEP R outperformed soft-DEEP R at very low connectivities and it outperformed BPTT with fixed random connectivity consistently at any connectivity level considered.

\begin{figure}[t!]
\begin{center}
\includegraphics[width=\textwidth]{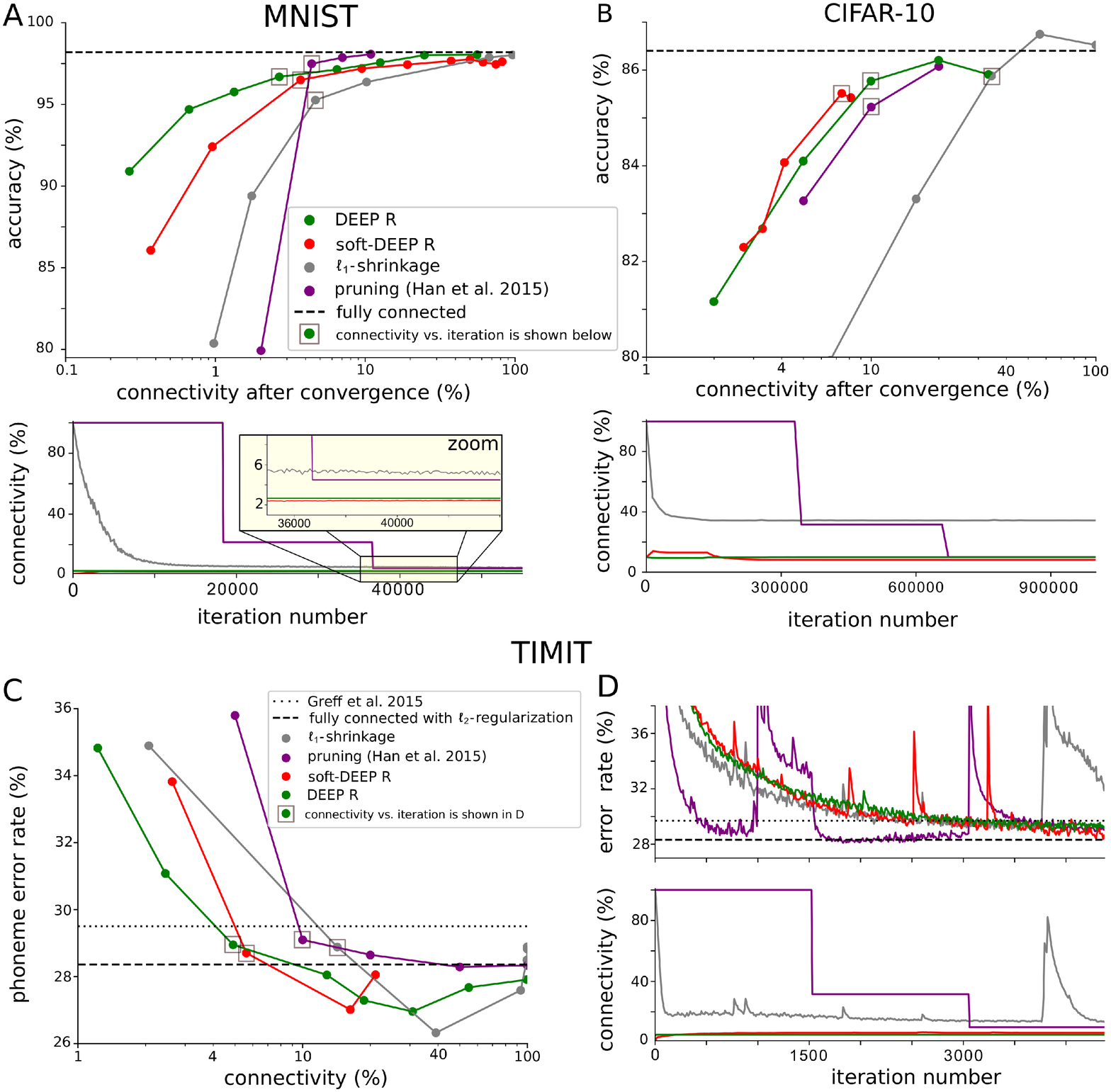}
\caption{\textbf{Efficient network solutions under strict sparsity constraints.}  
Accuracy and connectivity obtained by DEEP R and soft-DEEP R in comparison to those achieved by pruning \citep{han_learning_2015} and $\ell_1$-shrinkage \citep{tibshirani1996regression,collins_memory_2014}. \textbf{A, B)} Accuracy against the connectivity for MNIST (A) and CIFAR-10 (B). For each algorithm, one network with a decent compromise between accuracy and sparsity is chosen (small gray boxes) and its connectivity across training iterations is shown below. \textbf{C)} Performance on the TIMIT dataset. \textbf{D)} Phoneme error rates and connectivities across iteration number for representative training sessions.
}
\label{fig:comparison}
\end{center}
\end{figure}

\paragraph*{Comparison to algorithms that cannot be run on very sparse networks:} 
We wondered how much performance is lost when a strict connectivity constraint has to be taken into account during training as compared to pruning algorithms that only achieve sparse networks after training. To this end, we compared the performance of DEEP R and soft-DEEP R to recently proposed pruning algorithms: $\Lone$-shrinkage \citep{tibshirani1996regression,collins_memory_2014} and the pruning algorithm proposed by \cite{han_learning_2015}. $\Lone$-shrinkage uses simple $\Lone$-norm regularization and finds network solutions with a connectivity that is comparable to the state of the art \citep{collins_memory_2014,yu_exploiting_2012}. We chose this one since it is relatively close to DEEP R with the difference that it does not implement rewiring. The pruning algorithm from \cite{han_learning_2015} is more complex and uses a projection of network weights on a $\Lzero$ constraint. Both algorithms prune connections starting from the fully connected network.
The hyper-parameters such as learning rate, layer size, and weight decay coefficients were kept the same in all experiments. We validated by an extensive parameter search that these settings were good settings for the comparison algorithms, see Appendix \ref{sec:meth}. 

Results for the same setups as considered above (MNIST, CIFAR-10, TIMIT) are shown in Fig.~\ref{fig:comparison}. Despite the strict connectivity constraints, DEEP R and soft-DEEP R performed slightly better than the unconstrained pruning algorithms on CIFAR-10 and TIMIT at all connectivity levels considered. On MNIST, pruning was slightly better for larger connectivities. On MNIST and TIMIT, pruning and $\Lone$-shrinkage failed completely for very low connectivities while rewiring with DEEP R or soft-DEEP R still produced reasonable networks in this case. 

One interesting observation can be made for the error rate evolution of the LSTM on TIMIT (Fig.~\ref{fig:comparison}D). Here, both $\Lone$-shrinkage and pruning induced large sudden increases of the error rate, possibly due to instabilities induced by parameter changes in the recurrent network. In contrast, we observed only small glitches of this type in DEEP R. This indicates that sparsification of network connectivity is harder in recurrent networks due to potential instabilities, and that DEEP R is better suited to avoid such instabilities. The reason for this advantage of DEEP R is however not clear. 

\begin{figure}[t!]
    \includegraphics[width=\textwidth]{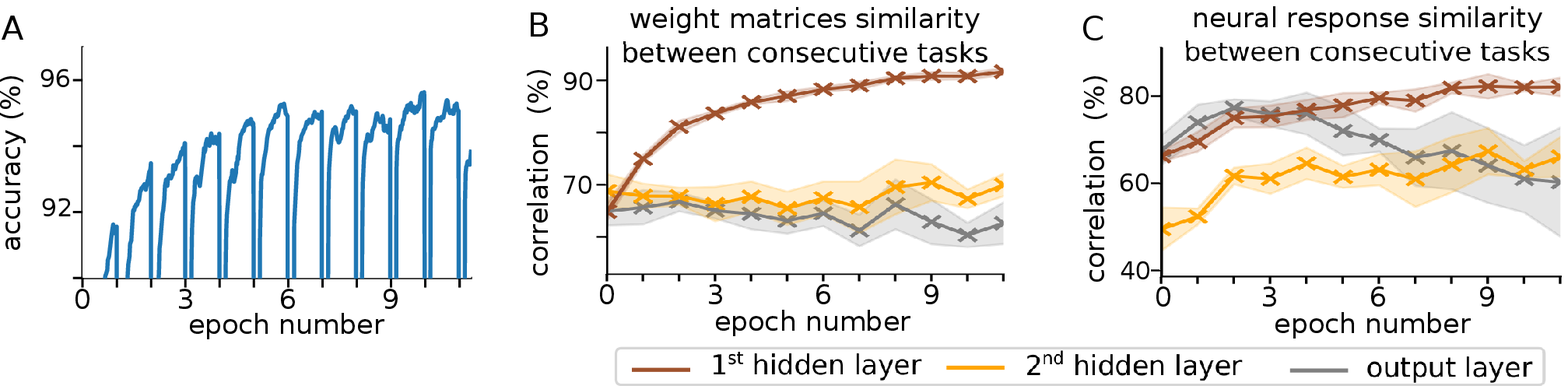}
    \caption{
\textbf{Transfer learning with DEEP R.} 
  The target labels of the MNIST data set were shuffled after every epoch. {\bf A)} Network accuracy vs.~training epoch. The increase of network performance across tasks (epochs) indicates a transfer of knowledge between tasks. {\bf B)} Correlation between weight matrices of subsequent epochs for each network layer. {\bf C)} Correlation between neural activity vectors of subsequent epochs for each network layer. The transfer is most visible in the first hidden layer, since weights and outputs of this layer are correlated across tasks. \DavidAndGui{Shaded areas in {\bf B)} and {\bf C)} represent standard deviation across 5 random seeds, influencing network initialization, noisy parameter updates, and shuffling of the outputs.}
    } \label{fig:shuffling}
\end{figure}

\paragraph*{Transfer learning is supported by DEEP R:}
If the temperature parameter $T$ is kept constant during training, the proposed rewiring algorithms do not converge to a static solution but explore continuously the posterior distribution of network configurations.
As a consequence, rewiring is expected to adapt to changes in the task in an on line manner. If the task demands change in an online learning setup, one may hope that a transfer of invariant aspects of the tasks occurs such that these aspects can be utilized for faster convergence on later tasks (transfer learning). 
To verify this hypothesis, we performed one experiment on the MNIST dataset where the class to which each output neuron should respond to was changed after each training epoch (class-shuffled MNIST task).
%
%
Fig.~\ref{fig:shuffling}A shows the performance of a network trained with DEEP R in the class-shuffled MNIST task. One can observe that performance recovered after each shuffling of the target classes. More importantly, we found a clear trend of increasing classification accuracy even across shuffles. This indicates a form of transfer learning in the network such that information about the previous tasks (i.e., the previous target-shuffled MNIST instances) was preserved in the network and utilized in the following instances.
We hypothesized for the reason of this transfer that early layers developed features that were invariant to the target shuffling and did not need to be re-learned in later task instances.
To verify this hypothesis, we computed the following two quantities. First, in order to quantify the speed of parameter dynamics in different layers, we computed the correlation between the layer weight matrices of two subsequent training epoch (Fig.~\ref{fig:shuffling}B). Second, in order to quantify the speed of change of network dynamics in different layers, we computed the correlation between the neuron outputs of a layer in subsequent epochs (Fig.~\ref{fig:shuffling}C).
We found that the correlation between weights and layer outputs increased across training epochs and were significantly larger in early layers.
This supports the hypothesis that early network layers learned features invariant to the shuffled coding convention of the output layer.

\section{Convergence properties of DEEP R and soft-DEEP R}\label{sec:theory}
The theoretical analysis of DEEP R is somewhat involved due to the implemented hard constraints. 
We therefore first introduce and discuss here another algorithm, soft-DEEP R where the theoretical treatment of convergence is more straight forward. 
%
In contrast to standard gradient-based algorithms, this convergence is not a convergence to a particular parameter vector, but a convergence to the target distribution over network configurations.

\paragraph*{Convergence properties of soft-DEEP R:} 
\label{sec:convergence-DR}
\begin{algorithm}
 \For{$i$ in $[1,N_{iterations}]$}{
  \For{all active connections $k$ $(\theta_k \geq 0)$}
  {$\theta_k \leftarrow \theta_k - \eta \frac{\partial}{\partial \theta_k} \Eb - \eta \alpha + \sqrt{2 \eta T} \ \nu_{k}$\;
  \textbf{if} $\theta_k < 0$ \textbf{then} set connection $k$ dormant \;
  }
  \For{all dormant connections $k$ $(\theta_k < 0)$}
  {$\theta_k \leftarrow \theta_k + \sqrt{2 \eta T} \ \nu_{k}$\;
  $\theta_k \leftarrow \max\{\theta_k, \theta_\text{min}\}$\;
  \textbf{if} $\theta_k \ge 0$ \textbf{then} set connection $k$ active \;
  }
 }
 \caption{Pseudo code of the soft-DEEP R algorithm. $\theta_\text{min}<0$ is a constant that defines a lower boundary for negative $\theta_k$s.}
 \label{alg:sdeepr}
\end{algorithm}
The soft-DEEP R algorithm is given in Algorithm \ref{alg:sdeepr}.
Note that the updates for active connections are the same as for DEEP R (line 3). Also the mapping from parameters $\theta_k$ to weights $w_k$ is the same as in DEEP R.
The main conceptual difference to DEEP R is that connection parameters continue their random walk when dormant (line 7). Due to this random walk, connections will be re-activated at random times when they cross zero. Therefore, 
soft-DEEP R does not impose a hard constraint on network connectivity but rather uses the $\Lone$ norm regularization to impose a soft-constraint. 

Since dormant connections have to be simulated, this algorithm is computationally inefficient for sparse networks. An approximation could be used where silent connections are re-activated at a constant rate, leading to an algorithm very similar to DEEP R. DEEP R adds to that the additional feature of a strict connectivity constraint.

The central result for soft-DEEP R has been proven in the context of spiking neural networks in \citep{kappel_network_2015} in order to understand rewiring in the brain from a functional perspective. The same theory however also applies to standard deep neural networks. To be able to apply standard mathematical tools, we consider parameter dynamics in continuous time. 
In particular, consider the following stochastic differential equation (SDE)
%
\begin{equation}
 d \thk \;=\; \beta \, \left . \ddthetak \log p^*(\bth|\bbX,\bbYs) \right|_{\btht}   dt  \;+ \; \sqrt{2 \beta T} \, d \wiener_{k} \;,
 \label{eq:sde}
\end{equation}
where $\beta$ is the equivalent to the learning rate and $\left . \ddthetak \log p^*(\bth|\bbX,\bbYs) \right|_{\btht}$ denotes the gradient of the log parameter posterior evaluated at the parameter vector $\btht$ at time $t$. The term $d \wiener_{k}$ denotes the infinitesimal updates of a standard Wiener process.
This SDE describes gradient ascent on the log posterior combined with a random walk in parameter space. We show in Appendix \ref{sec:soft-deep-rewiring-proof} that the unique stationary distribution of this parameter dynamics is given by
\begin{equation}
p^*(\bth) = \frac{1}{\mathcal{Z}} p^*(\bth \,|\, \bbX, \bbYs)^{\frac{1}{T}}\;.
\label{eqn:eq-deepr-statdist}
\end{equation}
%
%
%
Since we considered classification tasks in this article, we interpret the network output as a multinomial distribution over class labels. Then, the derivative of the log likelihood is equivalent to the derivative of the negative cross-entropy error. Together with an $\Lone$ regularization term for the prior, and after discretization of time, we obtain the update of line 3 in Algorithm \ref{alg:sdeepr} for non-negative parameters. For negative parameters, the first term in Eq.~\eqref{eq:sde} vanishes since the network weight is constant zero there. This leads to the update in line 7. Note that we introduced a reflecting boundary at $\theta_\text{min}<0$ in the practical algorithm to avoid divergence of parameters (line 8).
%
%
%
%

\paragraph*{Convergence properties of \deepr{}:}

A detailed analysis of the stochastic process that underlies the algorithm is provided in Appendix~\ref{sec:deep-rewiring-proof}. Here we summarize the main findings.
Each iteration of \deepr{} in Algorithm~\ref{alg:deepr} consists of two parts: In the first part (lines 2-5) all connections that are currently active are advanced, while keeping the other parameters at 0. In the second part (lines 6-9) the connections that became dormant during the first step are randomly replenished. 

\DavidAndGui{To describe the connectivity constraint over connections we introduce the binary constraint vector $\bbc$ which represents the set of active connections, i.e., element $c_k$ of $\bbc$ is $1$ if connection $k$ is allowed to be active and zero else. In Theorem~\ref{lem:deepr_stationnary} of Appendix~\ref{sec:deep-rewiring-proof}, we link \deepr{} to a compound Markov chain operator that simultaneously updates the parameters $\bth$ according to the soft-DEEP R dynamics under the constraint $\bbc$ and the constraint vector $\bbc$ itself. The stationary distribution of this Markov chain is given by the joint probability}
\begin{equation} 
  p^*(\bth, \bbc) \;\propto\;  p^*(\bth) \, \mathcal{C}(\bth,\bbc) \, p_{\mathcal{C}}(\bbc)\;,
  \label{eqn:stat-dist-dr}
\end{equation}
\DavidAndGui{where $\mathcal{C}(\bth,\bbc)$ is a binary function that indicates compatibility of $\bth$ with the constraint $\bbc$ and
$p^*(\bth)$ is the tempered posterior of Eq.~\eqref{eqn:eq-deepr-statdist} which is left stationary by soft-DEEP R in the absence of constraints. $p_{\mathcal{C}}(\bbc)$ in Eq.~\eqref{eqn:stat-dist-dr} is a uniform prior over all connectivity constraints with exactly $K$ synapses that are allowed to be active. By marginalizing over $\bbc$, we obtain that the posterior distribution of \deepr{} is identical to that of soft-DEEP R if the constraint on the connectivity is fulfilled. 
By marginalizing over $\bth$, we obtain that the probability of sampling a network architecture (i.e. a connectivity constraint $\bbc$) with \deepr{} and soft-\deepr{} are proportional to one another. The only difference is that \deepr{} exclusively visits architectures with $K$ active connections (see equation \eqref{eqn:meth-stationary-distribution-constraint-dt3} in Appendix~\ref{sec:deep-rewiring-proof} for details).}

\DavidAndGui{In other words,} \deepr{} solves a constraint optimization problem by sampling parameter vectors $\bth$ with high performance within the space of constrained connectivities. The algorithm will therefore spend most time in network configurations where the connectivity supports the desired network function, such that, connections with large support under the objective function \eqref{equ:produces_parameter} will be maintained active with high probability, while other connections are randomly tested and discarded if found not useful.

\section{Discussion}

\Robert{{\bf Related Work:} \cite{deFreitas2000sequential} considered sequential Monte Carlo sampling to train neural networks by combining stochastic weight updates with gradient updates. Stochastic gradient updates in mini-batch learning was considered in \cite{welling2011bayesian}, where also a link to the true posterior distribution was established. \cite{chen2016bridging} proposed a momentum scheme and temperature annealing (for the temperature $T$ in our notation) for stochastic gradient updates, leading to a stochastic optimization method. DEEP R extends this approach by using stochastic gradient Monte Carlo sampling not only for parameter updates but also to sample the connectivity of the network. In addition, the posterior in DEEP R is subject to a hard constraint on the network architecture. In this sense, DEEP R performs constrained sampling, or constrained stochastic optimization if the temperature is annealed. 
\cite{patterson2013stochastic} considered the problem of stochastic gradient dynamics constrained to the probability simplex. The methods considered there are however not readily applicable to the problem of constraints on the connection matrix considered here. Additionally, we show that a correct sampler can be constructed that does not simulate dormant connections. This sampler is efficient for sparse connection matrices. Thus, we developed a novel method, random reintroduction of connections, and analyzed its convergence properties (see Theorem~\ref{lem:deepr_stationnary} in Appendix~\ref{sec:deep-rewiring-proof}). 
}

\Robert{{\bf Conclusions:}} We have presented a method for modifying backprop and backprop-through-time so that not only the weights of connections, but also the connectivity graph is simultaneously optimized during training. This can be achieved while staying always within a given bound on the total number of connections. When the absolute value of a weight is moved by backprop through $0$, it becomes a weight with the opposite sign. In contrast, in DEEP R a connection vanishes in this case (more precisely: becomes dormant), and a randomly drawn other connection is tried out by the algorithm. This setup requires that, like in neurobiology, the sign of a weight does not change during learning. Another essential ingredient of DEEP R is that it superimposes the gradient-driven dynamics of each weight with a random walk. This feature can be viewed as another inspiration from neurobiology \citep{mongillo2017intrinsic}. An important property of DEEP R is that --- in spite of its stochastic ingredient ---  its overall learning dynamics remains theoretically tractable: Not as gradient descent in the usual sense, but as convergence to a stationary distribution of network configurations which assigns the largest probabilities to the best-performing network configurations. An automatic benefit of this ongoing stochastic parameter dynamics is that the training process immediately adjusts to changes in the task, while simultaneously transferring previously gained competences of the network (see Fig.~\ref{fig:shuffling}).



\paragraph*{Acknowledgements}
Written under partial support by the Human Brain Project of the
European Union $\# 720270$, and the Austrian Science Fund (FWF): I 3251-N33.
We thank Franz Pernkopf and Matthias Z\"ohrer for useful comments regarding the TIMIT experiment.

\small
\bibliography{library}

\begin{thebibliography}{33}
\providecommand{\natexlab}[1]{#1}
\providecommand{\url}[1]{\texttt{#1}}
\expandafter\ifx\csname urlstyle\endcsname\relax
  \providecommand{\doi}[1]{doi: #1}\else
  \providecommand{\doi}{doi: \begingroup \urlstyle{rm}\Url}\fi

\bibitem[Attardo et~al.(2015)Attardo, Fitzgerald, and
  Schnitzer]{attardo2015impermanence}
Alessio Attardo, James~E Fitzgerald, and Mark~J Schnitzer.
\newblock Impermanence of dendritic spines in live adult ca1 hippocampus.
\newblock \emph{Nature}, 523\penalty0 (7562):\penalty0 592--596, 2015.

\bibitem[Bishop(2006)]{bishop2006pattern}
Christopher~M Bishop.
\newblock \emph{Pattern recognition and machine learning}.
\newblock Springer, 2006.

\bibitem[Chambers \& Rumpel(2017)Chambers and Rumpel]{ChambersRumpel:17}
Anna~R Chambers and Simon Rumpel.
\newblock A stable brain from unstable components: Emerging concepts,
  implications for neural computation.
\newblock \emph{Neuroscience}, 2017.

\bibitem[Chen et~al.(2016)Chen, Carlson, Gan, Li, and Carin]{chen2016bridging}
Changyou Chen, David Carlson, Zhe Gan, Chunyuan Li, and Lawrence Carin.
\newblock Bridging the gap between stochastic gradient mcmc and stochastic
  optimization.
\newblock In \emph{Artificial Intelligence and Statistics}, pp.\  1051--1060,
  2016.

\bibitem[Collins \& Kohli(2014)Collins and Kohli]{collins_memory_2014}
Maxwell~D. Collins and Pushmeet Kohli.
\newblock Memory bounded deep convolutional networks.
\newblock \emph{arXiv preprint arXiv:1412.1442}, 2014.

\bibitem[de~Freitas et~al.(2000)de~Freitas, Niranjan, Gee, and
  Doucet]{deFreitas2000sequential}
Joao~FG de~Freitas, Mahesan Niranjan, Andrew~H. Gee, and Arnaud Doucet.
\newblock Sequential monte carlo methods to train neural network models.
\newblock \emph{Neural computation}, 12\penalty0 (4):\penalty0 955--993, 2000.

\bibitem[Furber et~al.(2014)Furber, Galluppi, Temple, and
  Plana]{furber2014spinnaker}
Steve~B Furber, Francesco Galluppi, Steve Temple, and Luis~A Plana.
\newblock The spinnaker project.
\newblock \emph{Proceedings of the IEEE}, 102\penalty0 (5):\penalty0 652--665,
  2014.

\bibitem[Graves \& Schmidhuber(2005)Graves and
  Schmidhuber]{graves2005framewise}
Alex Graves and J{\"u}rgen Schmidhuber.
\newblock Framewise phoneme classification with bidirectional {LSTM} and other
  neural network architectures.
\newblock \emph{Neural Networks}, 18\penalty0 (5):\penalty0 602--610, 2005.

\bibitem[Graves et~al.(2013)Graves, Mohamed, and Hinton]{graves_speech_2013}
Alex Graves, Abdel-rahman Mohamed, and Geoffrey Hinton.
\newblock Speech recognition with deep recurrent neural networks.
\newblock In \emph{Acoustics, {Speech} and {Signal} {Processing} ({ICASSP}),
  2013 {IEEE} {International} {Conference} on}, pp.\  6645--6649. IEEE, 2013.

\bibitem[Greff et~al.(2017)Greff, Srivastava, Koutn{\'\i}k, Steunebrink, and
  Schmidhuber]{greff2017lstm}
Klaus Greff, Rupesh~K Srivastava, Jan Koutn{\'\i}k, Bas~R Steunebrink, and
  J{\"u}rgen Schmidhuber.
\newblock {LSTM}: A search space odyssey.
\newblock \emph{IEEE transactions on neural networks and learning systems},
  2017.

\bibitem[Han et~al.(2015{\natexlab{a}})Han, Mao, and Dally]{han_deep_2015}
Song Han, Huizi Mao, and William~J. Dally.
\newblock Deep compression: {Compressing} deep neural networks with pruning,
  trained quantization and huffman coding.
\newblock \emph{arXiv preprint arXiv:1510.00149}, 2015{\natexlab{a}}.

\bibitem[Han et~al.(2015{\natexlab{b}})Han, Pool, Tran, and
  Dally]{han_learning_2015}
Song Han, Jeff Pool, John Tran, and William Dally.
\newblock Learning both weights and connections for efficient neural network.
\newblock In \emph{Advances in {Neural} {Information} {Processing} {Systems}},
  pp.\  1135--1143, 2015{\natexlab{b}}.

\bibitem[Han et~al.(2017)Han, Kang, Mao, Hu, Li, Li, Xie, Luo, Yao, Wang,
  et~al.]{han2017ese}
Song Han, Junlong Kang, Huizi Mao, Yiming Hu, Xin Li, Yubin Li, Dongliang Xie,
  Hong Luo, Song Yao, Yu~Wang, et~al.
\newblock {ESE}: Efficient speech recognition engine with sparse {LSTM} on
  {FPGA}.
\newblock In \emph{FPGA}, pp.\  75--84, 2017.

\bibitem[Holtmaat et~al.(2005)Holtmaat, Trachtenberg, Wilbrecht, Shepherd,
  Zhang, Knott, and Svoboda]{HoltmaatETAL:05}
Anthony~JGD Holtmaat, Joshua~T Trachtenberg, Linda Wilbrecht, Gordon~M
  Shepherd, Xiaoqun Zhang, Graham~W Knott, and Karel Svoboda.
\newblock Transient and persistent dendritic spines in the neocortex in vivo.
\newblock \emph{Neuron}, 45\penalty0 (2):\penalty0 279--291, 2005.

\bibitem[Iandola et~al.(2016)Iandola, Han, Moskewicz, Ashraf, Dally, and
  Keutzer]{iandola2016squeezenet}
Forrest~N Iandola, Song Han, Matthew~W Moskewicz, Khalid Ashraf, William~J
  Dally, and Kurt Keutzer.
\newblock {SqueezeNet}: {AlexNet}-level accuracy with 50x fewer parameters and
  $<$ 0.5 {MB} model size.
\newblock \emph{arXiv preprint arXiv:1602.07360}, 2016.

\bibitem[Jin et~al.(2016)Jin, Yuan, Feng, and Yan]{jin2016training}
Xiaojie Jin, Xiaotong Yuan, Jiashi Feng, and Shuicheng Yan.
\newblock Training skinny deep neural networks with iterative hard thresholding
  methods.
\newblock \emph{arXiv preprint arXiv:1607.05423}, 2016.

\bibitem[Jouppi et~al.(2017)Jouppi, Young, Patil, Patterson, Agrawal, Bajwa,
  Bates, Bhatia, Boden, Borchers, et~al.]{jouppi2017datacenter}
Norman~P Jouppi, Cliff Young, Nishant Patil, David Patterson, Gaurav Agrawal,
  Raminder Bajwa, Sarah Bates, Suresh Bhatia, Nan Boden, Al~Borchers, et~al.
\newblock In-datacenter performance analysis of a tensor processing unit.
\newblock \emph{arXiv preprint arXiv:1704.04760}, 2017.

\bibitem[Kappel et~al.(2015)Kappel, Habenschuss, Legenstein, and
  Maass]{kappel_network_2015}
David Kappel, Stefan Habenschuss, Robert Legenstein, and Wolfgang Maass.
\newblock Network {Plasticity} as {Bayesian} {Inference}.
\newblock \emph{PLOS Computational Biology}, 11\penalty0 (11):\penalty0
  e1004485, 2015.

\bibitem[Kingma \& Ba(2014)Kingma and Ba]{kingma_adam:_2014}
Diederik Kingma and Jimmy Ba.
\newblock Adam: {A} method for stochastic optimization.
\newblock \emph{arXiv preprint arXiv:1412.6980}, 2014.

\bibitem[Lee \& Hon(1989)Lee and Hon]{lee1989speaker}
K-F Lee and H-W Hon.
\newblock Speaker-independent phone recognition using hidden markov models.
\newblock \emph{IEEE Transactions on Acoustics, Speech, and Signal Processing},
  37\penalty0 (11):\penalty0 1641--1648, 1989.

\bibitem[Merolla et~al.(2014)Merolla, Arthur, Alvarez-Icaza, Cassidy, Sawada,
  Akopyan, Jackson, Imam, Guo, Nakamura, et~al.]{merolla2014million}
Paul~A Merolla, John~V Arthur, Rodrigo Alvarez-Icaza, Andrew~S Cassidy, Jun
  Sawada, Filipp Akopyan, Bryan~L Jackson, Nabil Imam, Chen Guo, Yutaka
  Nakamura, et~al.
\newblock A million spiking-neuron integrated circuit with a scalable
  communication network and interface.
\newblock \emph{Science}, 345\penalty0 (6197):\penalty0 668--673, 2014.

\bibitem[Mongillo et~al.(2017)Mongillo, Rumpel, and
  Loewenstein]{mongillo2017intrinsic}
Gianluigi Mongillo, Simon Rumpel, and Yonatan Loewenstein.
\newblock Intrinsic volatility of synaptic connections - a challenge to the
  synaptic trace theory of memory.
\newblock \emph{Current Opinion in Neurobiology}, 46:\penalty0 7--13, 2017.

\bibitem[Narang et~al.(2017)Narang, Diamos, Sengupta, and
  Elsen]{narang2017exploring}
Sharan Narang, Gregory Diamos, Shubho Sengupta, and Erich Elsen.
\newblock Exploring sparsity in recurrent neural networks.
\newblock \emph{arXiv preprint arXiv:1704.05119}, 2017.

\bibitem[Neal(1992)]{neal1992bayesian}
Radford~M Neal.
\newblock Bayesian training of backpropagation networks by the hybrid monte
  carlo method.
\newblock Technical report, Technical Report CRG-TR-92-1, Dept. of Computer
  Science, University of Toronto, 1992.

\bibitem[Patterson \& Teh(2013)Patterson and Teh]{patterson2013stochastic}
Sam Patterson and Yee~Whye Teh.
\newblock Stochastic gradient riemannian langevin dynamics on the probability
  simplex.
\newblock In \emph{Advances in Neural Information Processing Systems}, pp.\
  3102--3110, 2013.

\bibitem[Schemmel et~al.(2010)Schemmel, Br\"uderle, Gr\"ubl, Hock, Meier, and
  Millner]{schemmel2010wafer}
Johannes Schemmel, Daniel Br\"uderle, Andreas Gr\"ubl, Matthias Hock, Karlheinz
  Meier, and Sebastian Millner.
\newblock A wafer-scale neuromorphic hardware system for large-scale neural
  modeling.
\newblock In \emph{Circuits and systems (ISCAS), proceedings of 2010 IEEE
  international symposium on}, pp.\  1947--1950. IEEE, 2010.

\bibitem[Srinivas \& Babu(2015)Srinivas and Babu]{srinivas_data-free_2015}
Suraj Srinivas and R.~Venkatesh Babu.
\newblock Data-free parameter pruning for deep neural networks.
\newblock \emph{arXiv preprint arXiv:1507.06149}, 2015.

\bibitem[Stettler et~al.(2006)Stettler, Yamahachi, Li, Denk, and
  Gilbert]{StettlerETAL:06}
Dan~D Stettler, Homare Yamahachi, Wu~Li, Winfried Denk, and Charles~D Gilbert.
\newblock Axons and synaptic boutons are highly dynamic in adult visual cortex.
\newblock \emph{Neuron}, 49\penalty0 (6):\penalty0 877--887, 2006.

\bibitem[Tibshirani(1996)]{tibshirani1996regression}
Robert Tibshirani.
\newblock Regression shrinkage and selection via the lasso.
\newblock \emph{Journal of the Royal Statistical Society. Series B
  (Methodological)}, pp.\  267--288, 1996.

\bibitem[Tieleman \& Hinton(2012)Tieleman and Hinton]{tieleman2012lecture}
Tijmen Tieleman and Geoffrey Hinton.
\newblock Lecture 6.5-{RMSProp}: Divide the gradient by a running average of
  its recent magnitude.
\newblock \emph{COURSERA: Neural networks for machine learning}, 4\penalty0
  (2):\penalty0 26--31, 2012.

\bibitem[Welling \& Teh(2011)Welling and Teh]{welling2011bayesian}
Max Welling and Yee~W Teh.
\newblock Bayesian learning via stochastic gradient langevin dynamics.
\newblock In \emph{Proceedings of the 28th International Conference on Machine
  Learning (ICML-11)}, pp.\  681--688, 2011.

\bibitem[Yang et~al.(2015)Yang, Moczulski, Denil, de~Freitas, Smola, Song, and
  Wang]{yang_deep_2015}
Zichao Yang, Marcin Moczulski, Misha Denil, Nando de~Freitas, Alex Smola,
  Le~Song, and Ziyu Wang.
\newblock Deep fried convnets.
\newblock In \emph{Proceedings of the {IEEE} {International} {Conference} on
  {Computer} {Vision}}, pp.\  1476--1483, 2015.

\bibitem[Yu et~al.(2012)Yu, Seide, Li, and Deng]{yu_exploiting_2012}
D.~Yu, F.~Seide, G.~Li, and L.~Deng.
\newblock Exploiting sparseness in deep neural networks for large vocabulary
  speech recognition.
\newblock In \emph{2012 {IEEE} {International} {Conference} on {Acoustics},
  {Speech} and {Signal} {Processing} ({ICASSP})}, pp.\  4409--4412, March 2012.

\end{thebibliography}


\begin{thebibliography}{}

\bibitem[Gardiner, 2004]{Gardiner:04}
Gardiner, C.W. (2004).
\newblock Handbook of Stochastic Methods.
\newblock 3rd ed. Springer.

\bibitem[Kappel et~al., 2015]{KappelETAL:15}
Kappel, D., Habenschuss, S., Legenstein, R., and Maass, W. (2015).
\newblock Network plasticity as Bayesian inference.
\newblock {\em PLoS Computational Biology}, 11:e1004485.

\bibitem[Kappel et~al., 2017]{KappelETAL:17}
Kappel, D., Legenstein, R., Habenschuss, S., Hsieh, M., and Maass, W. (2017)
\newblock Reward-based stochastic self-configuration of neural circuits.
\newblock {\em arXiv preprint}, arXiv:1704.04238.

\bibitem[Neal, 1992]{Neal:92}
Neal, R.~M. (1992).
\newblock Bayesian training of backpropagation networks by the hybrid monte
  carlo method.
\newblock Technical report, University of Toronto.

\bibitem[Rumelhart et~al., 1985]{RumelhartETAL:85}
Rumelhart, D.~E., Hinton, G.~E., and Williams, R.~J. (1985).
\newblock Learning internal representations by error propagation.
\newblock Technical report, DTIC Document.


\end{thebibliography}
\bibliographystyle{iclr2018_conference}

\clearpage
\normalsize

\appendix

\section{Methods}\label{sec:meth}

Implementations of DEEP R are freely available at github.com/guillaumeBellec/deep\_rewiring.

\paragraph{Choosing hyper-parameters for DEEP R:}
The learning rate $\eta$ is defined for each task independently (see task descriptions below). Considering that the number of active connections is given as a constraint, the remaining hyper parameters are the regularization coefficient $\alpha$ and the temperature $T$.
We found that the performance of DEEP R does not depend strongly on the temperature $T$.
Yet, the choice of $\alpha$ has to be done more carefully. For each dataset there was an ideal value of $\alpha$: one order of magnitude higher or lower typically lead to a substantial loss of accuracy.

In MNIST, 96.3\% accuracy under the constraint of 1\% connectivity was achieved with $\alpha=10^{-4}$ and $T$ chosen so that $T = \frac{\eta}{2} 10^{-12}$.
In TIMIT, $\alpha=0.03$ and $T=0$ (higher values of $T$ could improve the performance slightly but it did not seem very significant). In CIFAR-10 a different $\alpha$ was assigned to each connectivity matrix. To reach 84.1\% accuracy with 5\% connectivity we used in each layer from input to output $\alpha=[0,10^{-7},10^{-6},10^{-9},0]$. The temperature is initialized with $T = \eta \frac{\alpha^2}{18}$ and decays with the learning rate (see paragraph of the methods about CIFAR-10).

\paragraph{Choosing hyper-parameters for soft-DEEP R:}
The main difference between soft-DEEP R and DEEP R is that the connectivity is not given as a global constraint. This is a considerable drawback if one has strict constraint due to hardware limitation but it is also an advantage if one simply wants to generate very sparse network solutions without having a clear idea on the connectivities that are reachable for the task and architecture considered.

In any cases, the performance depends on the choice of hyper-parameters $\alpha$, $T$ and $\theta_{min}$, but also - unlike in DEEP R - these hyper parameters have inter-dependent relationships that one cannot ignore (as for DEEP R, the learning rate $\eta$ is defined for each task independently).
The reason why soft-DEEP R depends more on the temperature is that the rate of re-activation of connections is driven by the amplitude of the noise whereas they are decoupled in DEEP R.
To summarize the results of an exhaustive parameter search, we found that $\sqrt{2T\eta}$ should ideally be slightly below $\alpha$. In general high $\theta_{min}$ leads to high performance but it also defines an approximate lower bound on the smallest reachable connectivity.
This lower bound can be estimated by computing analytically the stationary distribution under rough approximations and the assumption that the gradient of the likelihood is zero. If $p_{min}$ is the targeted lower connectivity bound, one needs $\theta_{min} \approx - \frac{T(1- p_{min})}{\alpha p_{min}}$.

For MNIST we used $\alpha = 10^{-5}$ and $T = \eta \frac{\alpha^2}{18}$ for all data points in Fig.~\ref{fig:mnist_cifar} panel A and a range of values of $\theta_{min}$ to scope across different ranges of connectivity lower bounds. In TIMIT and CIFAR-10 we used a simpler strategy which lead to a similar outcome, we fixed the relationships: $\alpha = 3 \sqrt{2 \frac{T}{\eta}} = \frac{-1}{3} \theta_{min}$ and we varied only $\alpha$ to produce the solutions shown in Fig.~\ref{fig:mnist_cifar} panel B and Fig.~\ref{fig:timit_masked}.

\paragraph{Re-implementing pruning and $\ell_1$-shrinkage:}

To implement $\ell_1$-shrinkage \citep{tibshirani1996regression,collins_memory_2014}, we applied the $\ell_1$-shrinkage operator $\theta \leftarrow \text{relu}\left(|\theta| - \eta \alpha \right) \text{sign}(\theta)$ after each gradient descent iteration. The performance of the algorithm is evaluated for different $\alpha$ varying on a logarithmic scale to privilege a sparse connectivity or a high accuracy. For instance for MNIST in Figure \ref{fig:comparison}.A we used $\alpha$ of the form $10^{-\frac{n}{2}}$ with $n$ going from $4$ to $12$. The optimal parameter was $n=9$.

We implemented the pruning described in \cite{han_learning_2015}. This algorithm uses several phases: training - pruning - training, or one can also add another pruning iteration: training - pruning - training - pruning - training. We went for the latter because it increased performance.
Each "training" phase is a complete training of the neural network with $\ell_2$-regularization\footnote{To be fair with other algorithms, we did not allocate three times more training time to pruning, each "training" phase was performed for a third of the total number of epochs which was chosen much larger than necessary.}.
At each "pruning" phase, the standard deviation of weights within a weight matrix $w_\text{std}$ is computed and all active weights with absolute values smaller than $q w_\text{std}$ are pruned ($q$ is called the quality parameter). Grid search is used to optimize the $\ell_2$-regularization coefficient and quality parameter. The results for MNIST are reported in Figure \ref{fig:param_search}.

\begin{figure}[t!]
\begin{center}
\includegraphics[width=.7\textwidth]{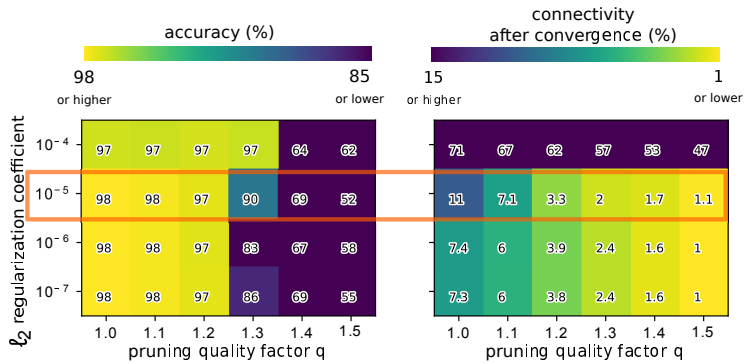}
\caption{{\bf Hyper-parameter search for the pruning algorithm according to \cite{han_learning_2015}.}
Each point of the grid represents a weight decay coefficient -- quality factor pair. The number and the color indicate the performance in terms of accuracy (left) or connectivity (right).
The red rectangle indicates the data points that were used in Fig.~\ref{fig:comparison}A.}
\label{fig:param_search}
\end{center}
\end{figure}

\paragraph{MNIST:}
We used a standard feed forward network architecture with two hidden layers with $200$ neurons each and rectified linear activation functions followed by a 10-fold softmax output.
For all algorithms we used a learning rate of 0.05 and a batch size of 10 with standard stochastic gradient descent. Learning stopped after 10 epochs. All reported performances in this article are based on the classification error on the MNIST test set.

\paragraph{CIFAR-10:}
The official tutorial for convolutional networks of tensorflow\footnote{TensorFlow version 1.3: www.tensorflow.org/tutorials/deep\_cnn}  is used as a reference implementation. Its performance out-of-the-box provides the fully connected baseline. We used the values given in the tutorial for the hyper-parameters in all algorithms. In particular the layer-specific weight decay coefficients that interact with our algorithms were chosen from the tutorial for DEEP R, soft-DEEP R, pruning, and $\ell_1$-shrinkage.

In the fully connected baseline implementation, standard stochastic gradient descent was used with a decreasing learning rate initialized to $1$ and decayed by a factor $0.1$ every $350$ epochs. Training was performed for one million iterations for all algorithms. For soft-DEEP R, which includes a temperature parameter, keeping a high temperature as the weight decays was increasing the rate of re-activation of connections. Even if intermediate solutions were rather sparse and efficient the solutions after convergence were always dense. Therefore, the weight decay was accompanied by annealing of the temperature $T$. This was done by setting the temperature to be proportional to the decaying $\eta$. This annealing was used for DEEP R and soft-DEEP R.

\paragraph{TIMIT:}
The TIMIT dataset was preprocessed and the LSTM architecture was chosen to reproduce the results from \cite{greff2017lstm}.
Input time series were formed by 12 MFCC coefficients and the log energy computed over each time frame. The inputs were then expanded with their first and second temporal derivatives. There are 61 different phonemes annotated in the TIMIT dataset, to report an error rate that is comparable to the literature we performed a standard grouping of the phonemes to generate 39 output classes \citep{lee1989speaker,graves_speech_2013,greff2017lstm}. As usual, the dialect specific sentences were excluded (SA files). The phoneme error rate was computed as the proportion of misclassified frames. 

A validation set and early stopping were necessary to train a network with dense connectivity matrix on TIMIT because the performance was sometimes unstable and it suddenly dropped during training as seen in Fig.~\ref{fig:comparison}D for $\Lone$-shrinkage. Therefore a validation set was defined by randomly selecting 5\% of the training utterances. All algorithms were trained for 40 epochs and the reported test error rate is the one at minimal validation error.

To accelerate the training in comparison the reference from \cite{greff2017lstm} we used mini-batches of size 32 and the ADAM optimizer (\cite{kingma_adam:_2014}). This was also an opportunity to test the performance of DEEP R and soft-DEEP R with such a variant of gradient descent.
The learning rate was set to $0.01$ and we kept the default momentum parameters of ADAM, yet we found that changing the $\epsilon$ parameter (as defined in \cite{kingma_adam:_2014}) from $10^{-8}$ to $10^{-4}$ improved the stability of fully connected networks during training in this recurrent setup. As we could not find a reference that implemented $\Lone$-shrinkage in combination with ADAM, we simply applied the shrinkage operator after each iteration of ADAM which might not be the ideal choice in theory. It worked well in practice as the minimal error rate was reached with this setup. The same type of $\Lone$ regularization in combination with ADAM was used for DEEP R and soft-DEEP R which lead to very sparse and efficient network solutions.

\paragraph{Initialization of connectivity matrices:}
We found that the performance of the networks depended strongly on the initial connectivity.
Therefore, we followed the following heuristics to generate initial connectivity for DEEP R, soft-DEEP R and the control setup with fixed connectivity.

First, for the connectivity matrix of each individual layer, the zero entries were chosen with uniform probability. 
Second, for a given connectivity constraint we found that the learning time increased and the performance dropped if the initial connectivity matrices were not chosen carefully. Typically the performance dropped drastically if the output layer was initialized to be very sparse.
Yet in most networks the number of parameters is dominated by large connectivity matrices to hidden layers. 
A basic rule of thumb that worked in our cases was to give an equal number of active connections to the large and intermediate weight matrices, whereas smaller ones - typically output layers - should be densely connected.

We suggest two approaches to refine this guess: One can either look at the statistics of the connectivity matrices after convergence of DEEP R or soft-DEEP R,  or, if possible, the second alternative is to initialize once soft-DEEP R with a dense matrix and observe the connectivity matrix after convergence.
In our experiments the connectivities after convergence were coherent with the rule of thumb described above and we did not need to pursue intensive search for ideal initial connectivity matrices.

For MNIST, the number of parameters in each layer was 235k, 30k and 1k from input to output.
Using our rule of thumb, for a given global connectivity $p_0$, the layers were respectively initialized with connectivity $0.75p_0$, $2.3p_0$ and $22.8p_0$.

For CIFAR-10, the baseline network had two convolutional layers with filters of shapes $5 \times 5 \times 3 \times 64$ and $5 \times 5 \times 64 \times 64$ respectively, followed by two fully connected layer with weight matrices of shape $2304 \times 384$ and $384 \times 192$. The last layer was then projected into a softmax over 10 output classes.
The numbers of parameters per connectivity matrices were therefore 5k, 102k, 885k, 74k and 2k from input to output. 
  The connectivity matrices were initialized with connectivity $1, 4p_0, 0.4p_0, 4p_0,$ and $1$ where $p_0$ is approximately the resulting global connectivity.

For TIMIT, the connection matrix from the input to the hidden layer was of size $39 \times 800$, the recurrent matrix had size $200 \times 800$ and the size of the output matrix was $200 \times 39$. Each of these three connectivity matrices were respectively initialized with a connectivity of $1.8 p_0, 0.6 p_0$, and $ 6 p_0$ where $p_0$ is approximately the resulting global connectivity.

\paragraph{Initialization of weight matrices:}
For CIFAR-10 the initialization of matrix coefficients was given by the reference implementation. For MNIST and TIMIT, the weight matrices were initialized with $\bth = \frac{1}{\sqrt{n_{in}}} \mathcal{N}(0,1) \bbc$ where $n_{in}$ is the number of afferent neurons, $\mathcal{N}(0,1)$ samples from a centered gaussian with unit variance and $\bbc$ is a binary connectivity matrix.

It would not be good to initialize the parameters of all dormant connections to zero in soft-DEEP R.
After a single noisy iteration, half of them would become active which would fail to initialize the network with a sparse connectivity matrix. To balance out this problem we initialized the parameters of dormant connections uniformly between the clipping value $\theta_{min}$ and zero in soft-DEEP R.

\paragraph{Parameters for Figure \ref{fig:shuffling}}
The experiment provided in Figure \ref{fig:shuffling} is a variant of our MNIST experiment where the target labels were shuffled after every training epoch.
To make the generalization capability of DEEP R over a small number of epochs visible, we enhanced the noise exploration by setting a batch to 1 so that the connectivity matrices were updated at every time step. Also we used a larger network with 400 neurons in each hidden layer. The remaining parameters were similar to those used previously: the connectivity was constrained to $1 \%$ and the connectivity matrices were initialized with respective connectivities: $0.01$, $0.01$, and $0.1$. The parameters of DEEP R were set to $\eta=0.05$, $\alpha=10^{-5}$ and $T=\eta \frac{\alpha^2}{2}$.



\section{Rewiring during training on MNIST}\label{sec:rewiring}

\begin{figure}[t!]
\begin{center}
\includegraphics[width=\textwidth]{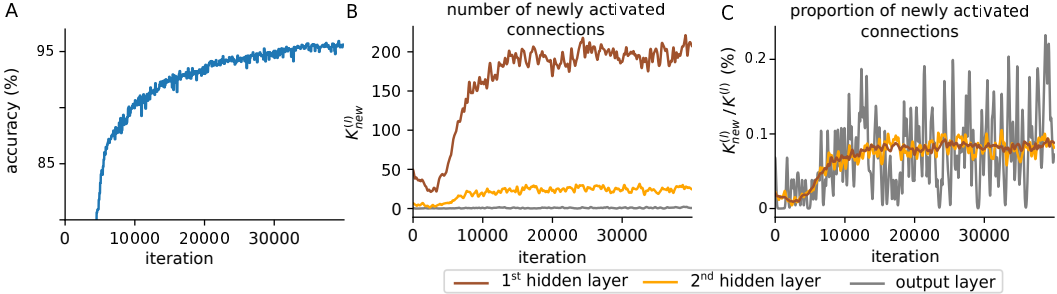}
\caption{{\bf Rewiring behavior of DEEP R.} {\bf A)} Network performance versus training iteration (same as green line in Fig.~\ref{fig:mnist_cifar}A bottom, but for a network constrained to $1 \%$ connectivity). {\bf B)} Absolute number of newly activated connections $K_\text{new}^{(l)}$ to layer $l=1$ (brown), $l=2$ (orange), and the output layer ($l=3$, gray) per iteration. Note that these layers have quite different numbers of potential connections $K^{(l)}$. {\bf C)} Same as panel B but the number of newly activated connections are shown relative to the number of potential connections in the layer (values in panel C are smoothed with a boxcar filter over $X$ iterations).}
\label{fig:rewiring}
\end{center}
\end{figure}

Fig.~\ref{fig:rewiring} shows the rewiring behavior of DEEP R per network layer for the feed-forward neural network trained on MNIST and the training run indicated by the small gray box around the green dot in Fig.~\ref{fig:mnist_cifar}A. Since it takes some iterations until the weights of connections that do not contribute to a reduction of the error are driven to $0$, the number of newly established connections $K_\text{new}^{(l)}$ in layer $l$ is small for all layers initially. After this initial transient, the number of newly activated connections stabilized to a value that is proportional to the total number of potential connections in the layer (Fig.~\ref{fig:mnist_cifar}B). DEEP R continued to rewire connections even late in the training process.

\section{Details to: Convergence properties of \softdeepr{}}
\label{sec:soft-deep-rewiring-proof}

Here we provide additional details on the convergence properties of the \softdeepr{} parameter update provided in Algorithm~\ref{alg:sdeepr}. We reiterate here Eq.~\eqref{eq:sde}:
\begin{equation}
 d \thk \;=\; \beta \, \left . \ddthetak \log p^*(\bth|\bbX,\bbYs) \right|_{\btht}   dt  \;+ \; \sqrt{2 \beta T} \, d \wiener_{k} \;.
 \label{eq:sde_special}
\end{equation}
Discrete time updates can be recovered from the set of SDEs \eqref{eq:sde_special} by integration over a short time period $\Delta t$
\begin{equation} \label{eq:sde_discr}
\Delta \theta_{k} =
     \eta  \frac{\partial}{\partial \theta_k} \log p^*(\bth | \bbX, \bbYs) + \sqrt{2 \eta T} \ \nu_{k}, 
\end{equation}
where the learning rate $\eta$ is given by $\eta = \beta\,\Delta t$.

We prove that the stochastic parameter dynamics Eq.~\eqref{eq:sde_special} converges to the target distribution $p^*(\bth)$ given in Eq.~\eqref{eqn:eq-deepr-statdist}. The proof is analogous to the derivation given in \cite{KappelETAL:15,KappelETAL:17}. We reiterate the proof here for the special case of supervised learning. The fundamental property of the synaptic sampling dynamics  Eq.~\eqref{eq:sde_special} is formalized in Theorem \ref{lem:single_sup} and proven below. Before we state the theorem, we briefly discuss its statement in simple terms. Consider some initial parameter setting $\bth^0$. Over time, the parameters change according to the dynamics \eqref{eq:sde_special}. Since the dynamics include a noise term, the exact value of the parameters $\bth(t)$ at some time $t>0$ cannot be determined. However, it is possible to describe the exact distribution of parameters for each time $t$. 
We denote this distribution by $p_\text{FP}(\bth, t)$, where the ``FP'' subscript stands for ``Fokker-Planck'' since the evolution of this distribution is described by the Fokker-Planck equation \eqref{eq:sup_FP} given below. Note that we make the dependence of this distribution on time explicit in this notation. 
It can be shown that for the dynamics \eqref{eq:sup_FP}, $p_\text{FP}(\bth, t)$ converges to a well-defined and unique {\em stationary distribution} in the limit of large $t$. To prove the convergence to the stationary distribution we show that it is kept invariant by the set of SDEs Eq.~\eqref{eq:sde_special} and that it can be reached from any initial condition. 

We now state Theorem \ref{lem:single_sup} formally. 
To simplify notation we drop in the following the explicit time dependence of the parameters $\bth$.
\begin{thm}
\label{lem:single_sup}
Let $p^{*} (\bth \,|\, \bbX, \bbYs)$ be a strictly positive, continuous probability distribution over parameters $\bth$, twice continuously differentiable with respect to $\bth$, and let $\beta>0$. Then
the set of stochastic differential equations Eq.~\eqref{eq:sde_special}
leaves the distribution $p^{*}(\bth)$ \eqref{eqn:eq-deepr-statdist} invariant.
Furthermore, $p^{*}(\bth)$ is the unique stationary distribution of the sampling dynamics.
\end{thm}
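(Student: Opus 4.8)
The plan is to translate the system of SDEs \eqref{eq:sde_special} into a deterministic evolution equation for the time-dependent parameter density $p_\text{FP}(\bth,t)$ and then exhibit $p^{*}(\bth)$ as a stationary solution of it. Since the drift of each coordinate is $\beta\,\ddthetak \log p^*(\bth|\bbX,\bbYs)$ and the noise is isotropic with constant covariance $2\beta T\,\mathbf{I}$, the associated Fokker--Planck equation takes the divergence form
\begin{equation}
  \ddt\, p_\text{FP}(\bth,t) \;=\; -\sum_k \ddthetak\, J_k(\bth,t)\;, \qquad
  J_k \;=\; \beta \left(\ddthetak \log p^*(\bth|\bbX,\bbYs)\right) p_\text{FP} \;-\; \beta T\, \ddthetak\, p_\text{FP}\;,
\end{equation}
where $J_k$ is the probability current along coordinate $k$. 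Stationarity is then equivalent to the vanishing of the divergence $\sum_k \ddthetak J_k$, so the task splits into verifying that $p^{*}(\bth)$ makes this divergence vanish (invariance) and arguing that no other normalizable density does (uniqueness).

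First I would verify invariance by direct substitution. Because $p^{*}(\bth) \propto p^*(\bth|\bbX,\bbYs)^{1/T}$, taking logarithms gives $\ddthetak \log p^{*}(\bth) = \tfrac{1}{T}\,\ddthetak \log p^*(\bth|\bbX,\bbYs)$, and hence $\ddthetak p^{*}(\bth) = \tfrac{1}{T}\,p^{*}(\bth)\,\ddthetak \log p^*(\bth|\bbX,\bbYs)$. Plugging $p_\text{FP} = p^{*}(\bth)$ into $J_k$ and using this identity, the factor $\beta T$ in the diffusion term cancels the $1/T$ from the density gradient, so the diffusion term exactly cancels the drift term and $J_k \equiv 0$ for every $k$. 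This is in fact a detailed-balance (zero-current) condition, which is strictly stronger than stationarity and immediately yields $\ddt p_\text{FP} = 0$. The strict positivity and twice-differentiability hypotheses on $p^*(\bth|\bbX,\bbYs)$ are exactly what is needed to guarantee that all the derivatives appearing here are well defined.

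The main obstacle is uniqueness, which does not follow from the invariance calculation and instead requires an ergodicity argument for the diffusion. Here I would exploit the fact that the diffusion matrix $2\beta T\,\mathbf{I}$ is strictly positive definite (using $\beta, T > 0$): the generator is uniformly elliptic, so the process is non-degenerate and irreducible, every open set being reachable with positive probability from any starting point. Combined with the existence of the normalizable invariant density $p^{*}(\bth)$ just constructed, standard results on the ergodicity of non-degenerate diffusions then force the stationary distribution to be unique up to normalization. The delicate point is ensuring non-explosion and positive recurrence of the trajectories; for the $\Lone$-type priors used in the paper the drift is confining at infinity, which supplies the required tightness, and for this reason I would carry out the uniqueness step under the standing assumption that $p^{*}(\bth)$ is integrable.
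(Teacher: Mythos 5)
Your proposal follows essentially the same route as the paper: both translate Eq.~\eqref{eq:sde_special} into the Fokker--Planck equation, verify by direct substitution that $p^{*}(\bth) \propto p^{*}(\bth\,|\,\bbX,\bbYs)^{1/T}$ annihilates the right-hand side (your zero-current formulation $J_k \equiv 0$ is exactly the cancellation the paper performs inside the $\ddthetak$ operators, just packaged as detailed balance), and then invoke non-degeneracy of the noise to get ergodicity and hence uniqueness, where the paper simply cites Gardiner. Your explicit flagging of non-explosion and the confining drift is a more careful treatment of the uniqueness step than the paper gives, but it is a refinement of, not a departure from, the same argument.
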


\begin{proof}
The stochastic differential equation Eq.~\eqref{eq:sde_special} translates into a Fokker-Planck equation \citep{Gardiner:04} that describes the evolution of the distribution over parameters $\bth$
\begin{equation}\label{eq:sup_FP}
 \ddt p_{\text{FP}}(\bth, t) = \sum_{k} - \ddthetak  \left(\beta \, \ddthetak \log p^{*} (\bth \,|\, \bbX,\bbYs) \right) p_{\text{FP}}(\bth, t)
 + \ddthetaksq \left(\beta \, T \,  p_{\text{FP}}(\bth, t)\right),
\end{equation}
where $p_{\text{FP}}(\bth, t)$ denotes the distribution over network parameters at time $t$. To show that $p^*(\bth)$ leaves the distribution invariant, we have to show that $\ddt p_{\text{FP}}(\bth,t)=0$ (i.e., $p_{\text{FP}}(\bth,t)$ does not change) if we set $p_{\text{FP}}(\bth,t)$ to $p^*(\bth)$.
Plugging in the presumed stationary distribution $p^{*}(\bth)$ for $p_{\text{FP}}(\bth,t)$ on the right hand side of Eq.~\eqref{eq:sup_FP}, one obtains
\begin{align*}
 \ddt p_{\text{FP}}(\bth,t) &= \sum_{k} - \ddthetak \left(\beta \, \ddthetak \log p^{*} (\bth \,|\, \bbX,\bbYs) \, p^{*}(\bth)  \right) 
 + \ddthetaksq \left(\beta \, T \, p^{*}(\bth) \right)\\
&=  \sum_{k} - \ddthetak \left(\beta \, p^{*}(\bth) \, \ddthetak \log p^{*} (\bth \,|\, \bbX,\bbYs)  \right)  
 + \ddthetak \left(\beta \, T \, \ddthetak p^{*}(\bth)  \right)\\
&= \sum_{k} - \ddthetak \left(\beta  \, p^{*}(\bth)  \, \ddthetak \log p^{*} (\bth \,|\, \bbX,\bbYs) \right)  
 + \ddthetak \left(\beta \, T \, p^{*}(\bth) \, \ddthetak \log p^{*}(\bth)  \right)\;,
 \end{align*} which by inserting
$p^{*}(\bth)  \;=\; \frac{1}{\mathcal{Z}} p^{*} (\bth \,|\, \bbX,\bbYs)^\frac{1}{T}$, with normalizing constant $\mathcal{Z}$,
becomes
 \begin{align*}
\ddt p_{\text{FP}}(\bth,t) &= \frac{1}{\mathcal{Z}} \sum_k - \ddthetak \left(\beta  \, p^{*}(\bth) \, \ddthetak \log p^{*} (\bth \,|\, \bbX,\bbYs) \right)  \\
 & \qquad \qquad + \ddthetak \left(\beta \, T \, p^{*}(\bth) \, \frac{1}{T} \ddthetak \log p^{*} (\bth \,|\, \bbX,\bbYs)  \right) \;=\; \sum_k 0 \;=\; 0\;.
\end{align*}
This proves that $p^{*}(\bth)$ is a stationary distribution of the parameter sampling dynamics Eq.~\eqref{eq:sde_special}. Since $\beta$ is positive by construction, the Markov process of the SDEs \eqref{eq:sde_special} is ergodic and the stationary distribution is unique (see Section 5.3.3. and 3.7.2 in \cite{Gardiner:04}).

The unique stationary distribution of Eq.~\eqref{eq:sup_FP} is given by $p^*(\bth) = \frac{1}{\mathcal{Z}} p^*(\bth | \bbX,\bbYs)^{\frac{1}{T}}$, i.e., $p^* (\bth)$ is the only solution for which $\ddt p_{\text{FP}}(\bth, t)$ becomes $0$, which completes the proof. 
\end{proof}


The updates of the soft-DEEP R algorithm (Algorithm \ref{alg:sdeepr}) can be written as
\begin{equation} 
 \Delta \theta_{k} =
    \begin{cases}
       \sqrt{2 T \eta} \ \nu_{k} & \text{if}\ \theta_k < 0\ \text{(dormant connection)} \\
      - \eta  \frac{\partial}{\partial \theta_k} \Eb - \eta \alpha +  \sqrt{2T \eta} \ \nu_{k} & \text{otherwise.}
    \end{cases}
\label{eqn:update-deepr}
\end{equation}
Eq.~\eqref{eqn:update-deepr} is a special case of the general discrete parameter dynamics \eqref{eq:sde_discr}. To see this we apply Bayes' rule to expand the derivative of the log posterior into the sum of the derivatives of the prior and the likelihood:
\begin{equation*}
  \ddthetak \log p^*(\bth|\bbX,\bbYs) \;=\; \ddthetak \log \ps{\bth} + \ddthetak \log \pn{\bbYs}{\bbX,\bth}\;, 
\end{equation*}
such that we can rewrite Eq.~\eqref{eq:sde_discr}
\begin{equation} \label{eq:sde-meth}
\Delta \theta_{k} =
     \eta  \left( \ddthetak \log \ps{\bth} + \ddthetak \log \pn{\bbYs}{\bbX,\bth} \right) + \sqrt{2 \eta T} \ \nu_{k}, 
\end{equation}
%
%
To include automatic network rewiring in our deep learning model we adopt the approach described in \cite{KappelETAL:15}. Instead of using the network parameters $\bth$ directly to determine the synaptic weights of network $\mathcal{N}$, we apply a nonlinear transformation $w_k = f(\theta_k)$ to each connection $k$, given by the function
\begin{align}
 w_k \;=\; f(\theta_k)  \;=\; s_k \, \frac{1}{\gamma} \log \left( 1 + \exp(\gamma\,s_k \,\theta_k) \right)\;,
 \label{eqn:relu-smooth}
\end{align}
where $s_k \in \{1,-1\}$ is a parameter that determines the sign of the connection weight and $\gamma>0$ is a constant parameter that determines the smoothness of the mapping. In the limit of large $\gamma$ Eq.~\eqref{eqn:relu-smooth} converges to the rectified linear function
\begin{align}
 w_k \;=\; f(\theta_k) \;=\; \begin{cases}
   0 & \text{if }\theta_k < 0 \quad (\textit{dormant connection}) \\
   s_k \, \theta_k & \text{else} \quad (\textit{active connection})
 \end{cases}\;,
 \label{eqn:relu}
\end{align}
such that all connections with $\theta_k<0$ are not functional.

Using this, the gradient of the log-likelihood function $\ddthetak \log \pn{\bbYs}{\bbX,\bth}$ in Eq.~\eqref{eq:sde-meth} can be written as $\ddthetak \log \pn{\bbYs}{\bbX,\bth} = - \ddthetak f(\theta_k) \ddthetak \Eb$ which for our choice of $f(\theta_k)$, Eqs.~\eqref{eqn:relu-smooth}, becomes
\begin{align}
 \ddthetak \log \pn{\bbYs}{\bbX,\bth} \;=\; - \sigma(\gamma\,s_k \,\theta_k) \, s_k \, \ddthetaki \Eb\;,
 \label{eqn:loglik-gradient-relu-smooth}
\end{align}
where $\sigma(x) = \frac{1}{1+e^{-x}}$ denotes the sigmoid function. The error gradient $\ddthetaki \Eb$ can be computed using standard Error Backpropagation \cite{Neal:92,RumelhartETAL:85}.

Theorem~\ref{lem:single_sup} requires that Eq.~\eqref{eqn:loglik-gradient-relu-smooth} is twice differentiable, which is true for any finite value for $\gamma$. In our simulations we used the limiting case of large $\gamma$ such that dormant connections are actually mapped to zero weight. In this limit, one approaches the simple expression
\begin{align}
 \ddthetaki \log \pn{\bbYs}{\bbX,\bth} \;=\; \begin{cases}
   0 & \text{if }\theta_k \leq 0 \\
   - s_k \, \ddthetaki \Eb & \text{else} 
 \end{cases}\;.
 \label{eqn:loglik-gradient-relu}
\end{align}
Thus, the gradient \eqref{eqn:loglik-gradient-relu} vanishes for dormant connections ($\theta_k < 0$). Therefore changes of dormant connections are independent of the error gradient.

This leads to the parameter updates of the soft-DEEP R algorithm given by Eq.~\eqref{eqn:update-deepr}. The term $\sqrt{2T \eta} \ \nu_{k}$ results from the diffusion term $\wiener_k$ integrated over $\Delta t$, where $\nu_{k}$ is a Gaussian random variable with zero mean and unit variance. The term $- \eta \alpha$ results from the exponential prior distribution $\ps{\bth}$ (the $\Lone$-regularization). Note that this prior is not differentiable at 0. In \eqref{eqn:update-deepr} we approximate the gradient by assuming it to be zero at $\theta_k=0$ and below. Thus, parameters on the negative axis are only driven by a random walk and parameter values might therefore diverge to $-\infty$. To fix this problem we introduced a reflecting boundary at $\theta_\text{min}$ (parameters were clipped at this value). Another potential solution would be to use a different prior distribution that also effects the negative axis, however we found that Eq.~\eqref{eqn:update-deepr} produces very good results in practice.

\section{Analysis of convergence of the \deepr{} algorithm}
\label{sec:deep-rewiring-proof}

\begin{algorithm}
 \textbf{given:} initial values $\bth'$, $\bbc'$ with $|\bbc'|=M$ \;
 \For{$i$ in $[1,N_{iterations}]$} {
  $\bth \;\sim\; \mathcal{T}_{\bth}(\bth | \bth', \bbc')$ \;
  $\bbc \;\sim\; \mathcal{T}_{\bbc}(\bbc | \bth)$ \;
  $\bth' \leftarrow \bth$, $\bbc' \leftarrow \bbc$ \;
 }
 \caption{A reformulation of Algorithm~\ref{alg:deepr} that is used for the proof in Theorem~\ref{lem:deepr_stationnary}. Markov transition operators $\mathcal{T}_{\bth}(\bth | \bth', \bbc')$ and $\mathcal{T}_{\bbc}(\bbc | \bth)$ are applied for parameter updates in each iteration. The transition operator $\mathcal{T}_{\bth}(\bth | \bth', \bbc')$ updates $\bth$ and corresponds to line 3, $\mathcal{T}_{\bbc}(\bbc | \bth)$ updates the connectivity constraint vector $\bbc$ and corresponds to lines 4,7 and 8 of Algorithm~\ref{alg:deepr}. $\bth'$ and $\bbc'$ denote the parameter vector and connectivity constraint of the previous time step, respectively.
 }
 \label{alg:deepr-formal}
\end{algorithm}

Here we provide additional details to the convergence properties of the \deepr{} algorithm. To do so we formulate the algorithm in terms of a Markov chain that evolves the parameters $\bth$ and the connectivity constraints (listed in Algorithm~\ref{alg:deepr-formal}). Each application of the Markov transition operators corresponds to one iteration of the \deepr{} algorithm. We show that the distribution of parameters and network connectivities over the iterations of \deepr{} converges to the stationary distribution Eq.~\eqref{eqn:stat-dist-dr} that jointly realizes parameter vectors $\bth$ and admissible connectivity constraints.

Each iteration of \deepr{} corresponds to two update steps, which we formally describe in Algorithm~\ref{alg:deepr-formal} using the Markov transition operators $\mathcal{T}_{\bth}$ and $\mathcal{T}_{\bbc}$ and the binary constraint vector $\bbc \in \{0,1\}^M$ over all $M$ connections of the network with elements $c_k$, where $c_k=1$ represents an active connection $k$. $\bbc$ is a constraint on the dynamics, i.e., all connections $k$ for which $c_k=0$ have to be dormant in the evolution of the parameters. The transition operators are conditional probability distributions from which in each iteration new samples for $\bth$ and $\bbc$ are drawn for given previous values $\bth'$ and $\bbc'$.
\begin{enumerate}
  \item \emph{Parameter update}: The transition operator $\mathcal{T}_{\bth}(\bth | \bth', \bbc')$ updates all parameters $\theta_k$ for which $c_k=1$ (active connections) and leaves the parameters $\theta_k$ at their current value for $c_k=0$ (dormant connections). The update of active connections is realized by advancing the SDE \eqref{eq:sde} for an arbitrary time step $\Delta t$ (line~3 of Algorithm~\ref{alg:deepr-formal}).
  \item \emph{Connectivity update}: for all parameters $\theta_k$ that are dormant, set $c_k=0$ and randomly select an element $c_l$ which is currently 0 and set it to 1. This corresponds to line 3 of Algorithm~\ref{alg:deepr-formal} and is realized by drawing a new $\bbc$ from $\mathcal{T}_{\bbc}(\bbc | \bth)$.
\end{enumerate}

The constraint imposed by $\bbc$ on $\bth$ is formalized through the deterministic binary function $\mathcal{C}(\bth,\bbc) \in \{0,1\}$ which is $1$ if the parameters $\bth$ are compatible with the constraint vector $\bbc$ and $0$ otherwise. This is expressed as (with $\Rightarrow$ denoting the Boolean implication):
\begin{equation}
\mathcal{C}(\bth,\bbc) \;=\;
    \begin{cases}
      1 & \text{if for all }k, 1 \leq k \leq K: \; c_k=0 \;\Rightarrow\; \theta_k < 0 \\
      0 & \text{else}
    \end{cases}\,.
    \label{eqn:meth-constraint}
\end{equation}
The constraint $\mathcal{C}(\bth,\bbc)$ is fulfilled if all connections $k$ with $c_k=0$ are dormant ($\theta_k < 0$).

Note that the transition operator $\mathcal{T}_{\bbc}(\bbc | \bth)$ depends only on the parameter vector $\bth$. It samples a new $\bbc$ with uniform probability among the constraint vectors that are compatible with the current set of parameters $\bth$. We write the number of possible vectors $\bbc$ that are compatible with $\bth$ as $\mu(\bth)$, given by the binomial coefficient (the number of possible selections that fulfill the constraint  of new active connections)
\begin{equation}
  \mu(\bth) \;=\; \sum_{\bbc \in \bbchi} \mathcal{C}(\bth,\bbc) \;=\; \binom{M-|\bth \ge 0|}{K-|\bth \ge 0|},
  \quad \text{with} \quad \bbchi = \left\{ \bbxi \in \{0,1\}^M \;\middle|\; |\bbxi|=K \right\}\;,
  \label{equ:meth-mu}
\end{equation}
where $|\bbc|$ denotes the number of non-zero elements in $\bbc$ and $\bbchi$ is the set of all binary vectors with exactly $K$ elements of value $1$.
Using this we can define the operator $\mathcal{T}_{\bbc}(\bbc|\bth)$ as:
\begin{equation}
	\mathcal{T}_{\bbc}(\bbc|\bth) \;=\; \frac{1}{\mu(\bth)} \sum_{\bbxi \in \bbchi} \delta(\bbc - \bbxi ) \; \mathcal{C}(\bth,\bbc)\;
  \label{eqn:meth-constraint-transop}
\end{equation}
where $\delta$ denotes the vectorized Kronecker delta function, with $\delta(\ve 0) = 1$ and 0 else. Note that Eq.~\eqref{eqn:meth-constraint-transop} assigns non-zero probability only to vectors $\bbc$ that are zero for elements $k$ for which $\theta_k < 0$ is true (assured by the second term). In addition vectors $\bbc$ have to fulfill $|\bbc|=K$. Therefore, sampling from this operator introduces randomly new connection for the number of missing ones in $\bth$. This process models the \emph{connectivity update} of Algorithm~\ref{alg:deepr-formal}.

The transition operator $\mathcal{T}_{\bth}(\bth | \bth', \bbc')$ in Eq.~\eqref{eqn:meth-deepr-tranop-total} evolves the parameter vector $\bth$ under the constraint $\bbc$, i.e., it produces parameters confined to the connectivity constraint. By construction this operator has a stationary distribution that is given by the following Lemma.
\begin{lem}
Let $\mathcal{T}_{\bth}(\bth | \bth', \bbc)$ be the transition operator of the Markov chain over $\bth$ which is defined, as the integration of the SDE written in Eq. \eqref{eq:sde} over an interval $\Delta t$ for active connections ($c_k=1$), and as the identity for the remaining dormant connections ($c_k=0$). Then it leaves the following distribution $p^{*}(\bth|\bbc)$ invariant
\begin{equation}
p^{*}(\bth|\bbc) = \frac{1}{p^{*}(\bth_{\notin \bbc} < \mathbf{0})} p^{*}(\bth) \mathcal{C}(\bth,\bbc)\;,
\label{eqn:cond-lem}
\end{equation}
where $\bth_{\in \bbc}$ denotes the truncation of the vector $\bth$ to the active connections ($c_k=1$), thus $p^{*}(\bth_{\notin \bbc} < \mathbf{0})$ is the probability that all connections outside of $\bbc$ are dormant according to the posterior, and $p^{*}(\bth)$ is the posterior (see Theorem 1).
\label{lem:cond-dist-deepr}
\end{lem}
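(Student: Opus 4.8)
The plan is to exploit the product structure of the target distribution together with Theorem~\ref{lem:single_sup}, applied to the subsystem of active connections. First I would observe that the operator $\mathcal{T}_{\bth}(\bth\,|\,\bth',\bbc)$ decouples the coordinates into the active block $\bth_{\in\bbc}$ (the $k$ with $c_k=1$), which is advanced by the SDE~\eqref{eq:sde}, and the dormant block $\bth_{\notin\bbc}$ (the $k$ with $c_k=0$), which is left untouched (identity map). Hence proving invariance of $p^*(\bth|\bbc)$ reduces to two claims: that the identity map preserves the marginal over $\bth_{\notin\bbc}$, and that the SDE restricted to the active block preserves the conditional of $\bth_{\in\bbc}$ given the dormant ones.

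The key structural observation is that on the support of $\mathcal{C}(\bth,\bbc)$, i.e.\ where every dormant connection satisfies $\theta_k<0$, all dormant connections carry zero weight. Consequently the likelihood $\pn{\bbYs}{\bbX,\bth}$ depends only on the active parameters $\bth_{\in\bbc}$, and the factorized ($\Lone$) prior splits as $\ps{\bth}=\prod_k \ps{\theta_k}$. I would use this to factorize the target,
\begin{equation*}
p^*(\bth)\,\mathcal{C}(\bth,\bbc)\;\propto\;
\underbrace{\Big[\textstyle\prod_{k:c_k=1}\ps{\theta_k}\,\pn{\bbYs}{\bbX,\bth_{\in\bbc}}\Big]^{1/T}}_{\propto\;\tilde p(\bth_{\in\bbc})}
\;\cdot\;
\underbrace{\Big[\textstyle\prod_{k:c_k=0}\ps{\theta_k}\Big]^{1/T}\mathcal{C}(\bth,\bbc)}_{\propto\;g(\bth_{\notin\bbc})}\,,
\end{equation*}
so that $p^*(\bth|\bbc)=\tilde p(\bth_{\in\bbc})\,g(\bth_{\notin\bbc})$, where $\tilde p$ is a tempered posterior over the active block alone and $g$ is the density of the dormant block conditioned on being negative. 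A short computation shows that the normalizer $1/p^{*}(\bth_{\notin\bbc}<\mathbf{0})$ is exactly what turns this product into a probability distribution.

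Next I would verify that the active factor $\tilde p(\bth_{\in\bbc})$ is stationary under the restricted SDE. Because the log-likelihood depends only on $\bth_{\in\bbc}$ on the constraint support and the prior factorizes, the drift $\beta\,\ddthetak\log p^*(\bth|\bbX,\bbYs)$ of an active coordinate equals the gradient of the reduced log-posterior and does not involve the (fixed) dormant coordinates. Thus the restricted dynamics is exactly the synaptic-sampling SDE of Theorem~\ref{lem:single_sup} for the reduced network that contains only the active connections, and that theorem yields $\tilde p(\bth_{\in\bbc})$ as its unique stationary distribution. Since the dormant block is held fixed by the identity map, $g(\bth_{\notin\bbc})$ is trivially invariant, and the product $\tilde p\cdot g=p^*(\bth|\bbc)$ is therefore left invariant by $\mathcal{T}_{\bth}$, which is the claim.

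The main obstacle is establishing the factorization cleanly, namely justifying that on the constraint support the drift of the active coordinates is autonomous (independent of the dormant values) and coincides with the gradient of the reduced posterior. This rests on two ingredients already available: the product form of the prior, and, crucially, Eq.~\eqref{eqn:loglik-gradient-relu}, which states that the log-likelihood gradient vanishes for any connection with $\theta_k<0$. The latter guarantees both that dormant weights contribute nothing to the likelihood and that, even if an active coordinate wanders negative during the diffusion, the reduced dynamics stays consistent with the SDE over the full parameter vector, so that Theorem~\ref{lem:single_sup} applies verbatim to the reduced system.
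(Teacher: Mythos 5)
Your proposal is correct and takes essentially the same route as the paper's own proof: you factorize the constrained target into an active-block tempered posterior times a dormant-block negative-restricted prior factor (the paper's identity $p^{*}(\bth|\bbc)=\frac{1}{\mathcal{L}(\bbc)}\,p^{*}(\bth_{\in \bbc}\,|\,\bth_{\notin \bbc}<\mathbf{0})\,p^{*}(\bth_{\notin \bbc},\bth_{\notin \bbc}<\mathbf{0})$), apply Theorem~\ref{lem:single_sup} to the reduced SDE on the active coordinates after arguing the drift is autonomous there via the factorized prior and Eq.~\eqref{eqn:loglik-gradient-relu}, and note that the identity map trivially preserves the dormant factor while the normalizer equals $p^{*}(\bth_{\notin \bbc}<\mathbf{0})$. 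This matches the paper's two-part argument (invariance, then identification of $\mathcal{L}(\bbc)$) step for step.
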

The proof is divided into two sub proofs. First we show that the distribution defined as $p^{*}(\bth|\bbc) = \frac{1}{\mathcal{L}(\bbc)} p^{*}(\bth) \mathcal{C}(\bth,\bbc)$ with $\mathcal{L}(\bbc)$ a normalization constant, is left invariant by $\mathcal{T}_{\bth}(\bth | \bth', \bbc)$, second we will show that this normalization constant has to be equal to $p^{*}(\bth_{\notin \bbc} < \mathbf{0})$. In coherence with the notation $\bth_{\in \bbc}$ we will use verbally that $\theta_k$ is an element of $\bbc$ if $c_k=1$.
\begin{proof}
To show that the distribution defined as $p^{*}(\bth|\bbc) = \frac{1}{\mathcal{L}(\bbc)} p^{*}(\theta) \mathcal{C}(\bth,\bbc)$ is left invariant, we will show directly that $\int_{\bth'} \mathcal{T}_{\bth}(\bth | \bth', \bbc) p^{*}(\bth' | \bbc) d \bth'= p^{*}(\bth | \bbc) $.
To do so we will show that both $p^{*}(\bth' | \bbc)$ and $\mathcal{T}$ factorizes in terms that depend only on $\bth'_{\in \bbc}$ or on $\bth'_{\notin \bbc}$ and thus we will be able to separate the integral over $\bth'$ as the product of two simpler integrals.

We first study the distribution $p^{*}(\bth'_{\in \bbc} | \bbc)$. Before factorizing, one has to notice a strong property of this distribution. Let's partition the tempered posterior distribution $p^{*}(\bth')$ over the cases when the constraint is satisfied or not
\begin{eqnarray}
p^{*}(\bth' | \bbc)
& = & \frac{1}{\mathcal{L}(\bbc)} p^{*}(\bth') \mathcal{C}(\bbc,\bth) \\
& = & \frac{1}{\mathcal{L}(\bbc)} \left[ p^{*}(\bth',\mathcal{C}(\bbc,\bth)=1) + p^{*}(\bth', \mathcal{C}(\bbc,\bth) = 0) \right] \mathcal{C}(\bbc,\bth)
\end{eqnarray}
when we multiply individually the first and the second term with $\mathcal{C}(\bbc,\bth)$, $\mathcal{C}(\bbc,\bth)$ can be replaced by its binary value and the second term is always null. It remains that
\begin{eqnarray}
p^{*}(\bth' | \bbc)& = & \frac{1}{\mathcal{L}(\bbc)} p^{*}(\bth',\mathcal{C}(\bbc,\bth)=1)
\end{eqnarray}
seeing that one can rewrite the condition $\mathcal{C}(\bbc,\bth)=1$ as the condition on the sign of the random variable $\bth_{\notin \bbc} < \mathbf{0}$ (note that in this inequality $\bbc$ is a deterministic constant and $\bth'_{\notin \bbc}$ is a random variable)
\begin{eqnarray}
p^{*}(\bth'|\bbc) & = & \frac{1}{\mathcal{L}(\bbc)}p^{*}(\bth' , \bth'_{\notin \bbc} < \mathbf{0})
\end{eqnarray}
We can factorize the conditioned posterior as $p^{*}(\bth,\bth_{\notin \bbc} < \mathbf{0}) = p^{*}(\bth_{\in \bbc}| \bth_{\notin \bbc},\bth_{\notin \bbc} < \mathbf{0})p^{*}(\bth_{\notin \bbc},\bth_{\notin \bbc} < \mathbf{0})$.
But when the dormant parameters are negative $\bth_{\notin \bbc} < \mathbf{0}$, the active parameters $\bth_{\in \bbc}$ do not depend on the actual value of the dormant parameters $\bth_{\notin \bbc}$, so we can simplify the conditions of the first factor further to obtain
\begin{eqnarray}
p^{*}(\bth'|\bbc) & = &
\frac{1}{\mathcal{L}(\bbc)} p^{*}(\bth'_{\in \bbc} | \bth'_{\notin \bbc} < \mathbf{0}) p^{*}(\bth'_{\notin \bbc}, \bth'_{\notin \bbc} < \mathbf{0})
\label{eq:cond-prob-form}
\end{eqnarray}
We now study the operator $\mathcal{T}_{\bth}$. It factorizes similarly because it is built out of two independent operations: one that integrates the SDE over the active connections and one that applies identity to the dormant ones. Moreover all the terms in the SDE which evolve the active parameters $\bth_{\notin \bbc}$ are independent of the dormant ones $\bth_{\notin \bbc}$ as long as we know they are dormant. Thus, the operator $\mathcal{T}_{\bth}$ splits in two
\begin{eqnarray}
\mathcal{T}_{\bth}(\bth | \bth', \bbc)
& = &
 \mathcal{T}_{\bth}(\bth_{\in \bbc} | \bth'_{\in \bbc}, \bbc) \mathcal{T}_{\bth}(\bth_{\notin \bbc}| \bth'_{\notin \bbc}, \bbc)
\end{eqnarray}
To finally separate the integration over $\bth$ as a product of two integrals we need to make sure that all the factor depend only on the variable $\bth'_{\in \bbc}$ or only on $\bth'_{\notin \bbc}$. This might not seem obvious but even the conditioned probability $p^{*}(\bth'_{\in \bbc} | \bth'_{\notin \bbc} < \mathbf{0})$ is a function of $\bth'_{\in \bbc}$ because in the conditioning $\bth'_{\notin \bbc} < \mathbf{0}$, $\bth'_{\notin \bbc}$ refers to the random variable and not to a specific value over which we integrate. As a result the double integral is equal to the product of the two integrals
\begin{eqnarray}
\int_{\bth'} \mathcal{T}_{\bth}(\bth | \bth', \bbc) p^{*}(\bth' | \bbc) d \bth'
& = &  \frac{1}{\mathcal{L}(\bbc)}
\int_{\bth'_{\in \bbc}} \mathcal{T}_{\bth}(\bth_{\in \bbc} | \bth'_{\in \bbc}, \bbc) p^{*}(\bth'_{\in \bbc} | \bth'_{\notin \bbc} < \mathbf{0}) d \bth'_{\in \bbc} \\
& & 
\int_{\bth'_{\notin \bbc}} \mathcal{T}_{\bth}(\bth_{\notin \bbc} | \bth'_{\notin \bbc}, \bbc) p^{*}(\bth'_{\notin \bbc} , \bth'_{\notin \bbc}<\mathbf{0}) d \bth_{\notin \bbc}'
\end{eqnarray}
We can now study the two integrals separately. The second integral over the parameters $\bth_{\notin \bbc}$ is simpler because by construction the operator $\mathcal{T}_{\bth}$ is the identity
\begin{eqnarray}
\int_{\bth'_{\notin \bbc}} \mathcal{T}_{\bth}(\bth_{\notin \bbc} | \bth'_{\notin \bbc}, \bbc) p^{*}(\bth'_{\notin \bbc} , \bth'_{\notin \bbc}<\mathbf{0}) d \bth_{\notin \bbc}'
& = &
p^{*}(\bth_{\notin \bbc} , \bth_{\notin \bbc}<\mathbf{0})
\end{eqnarray}
There is more to say about the first integral over the active connections $\bth_{\in \bbc}$.
The operator $\mathcal{T}_{\bth}(\bth_{\in \bbc} | \bth'_{\notin \bbc}, \bbc)$ integrates over the active parameters $\bth_{\in \bbc}$ the same SDE as before with the difference that the network is reduced to a sparse architecture where only the parameters $\bth_{\in \bbc}$ are active. We want to find the relationship between the stationary distribution of this new operator and $p^{*}(\bth)$ that is written in the integral which is defined in equation \eqref{eqn:eq-deepr-statdist} as the tempered posterior of the dense network. In fact, the tempered posterior of the dense network marginalized and conditioned over the dormant connections $p^{*}(\bth'_{\in \bbc} | \bth'_{\notin \bbc} < \mathbf{0})$ is equal to the stationary distribution of $\mathcal{T}_{\bth}(\bth_{\in \bbc} | \bth'_{\notin \bbc}, \bbc)$ (i.e.  of the SDE in the sparse network). To prove this, we detail in the following paragraph that the drift in the SDE evolving the sparse network is given by the log-posterior of the dense network condition on $\bth_{\notin \bbc} < \mathbf{0}$ and using Theorem 1, we will conclude that $p^{*}(\bth'_{\in \bbc} | \bth'_{\notin \bbc} < \mathbf{0})$ is the stationary distribution of $\mathcal{T}_{\bth}(\bth_{\in \bbc} | \bth'_{\notin \bbc}, \bbc)$.

We write the prior and the likelihood of the sparse network as function of the prior and the likelihood $p_{\mathcal{S}}$ with $p_{\mathcal{N}}$ of the dense network. The likelihood in the sparse network is defined as previously with the exception that the dormant connections are given zero-weight $w_k=0$ so it is equal to $p_{\mathcal{N}} (\bbX,\bbYs | \bth_{\in \bbc} , \bth_{\notin \bbc} < \mathbf{0})$. The difference between the prior that defines soft-DEEP R and the prior of DEEP R remains in the presence of the constraint. When considering the sparse network defined by $\bbc$ the constraint is satisfied and the prior of soft-DEEP R marginalized over the dormant connections $p_{\mathcal{S}} (\bth_{\in \bbc})$ is the prior of the sparse network with $p_{\mathcal{S}}$ defined as before. As this prior is connection-specific ($p_{\mathcal{S}}(\theta_i)$ independent of $\theta_j$), this implies that $p_{\mathcal{S}} (\bth_{\in \bbc})$ is independent of the dormant connection, and the prior $p_{\mathcal{S}} (\bth_{\in \bbc})$ is equal to $p_{\mathcal{S}} (\bth_{\in \bbc} | \bth_{\notin \bbc} < \mathbf{0})$. Thus, we can write the posterior of the sparse network which is by definition proportional to the product $p_{\mathcal{N}} (\bbX,\bbYs | \bth_{\in \bbc} , \bth_{\notin \bbc} < \mathbf{0}) p_{\mathcal{S}} (\bth_{\in \bbc} | \bth_{\notin \bbc} < \mathbf{0})$. Looking back to the definition of the posterior of the dense network this product is actually proportional to posterior of the dense network conditioned on the negativity of dormant connections $p^{*}(\bth_{\in \bbc} | \bth_{\notin \bbc} < \mathbf{0} , \bbX,\bbYs)$. The posterior of the sparse network is therefore proportional to the conditioned posterior of the dense network but as they both normalize to $1$ they are actually equal. Writing down the new SDE, the diffusion term $\sqrt{2T\beta}d\mathcal{W}_k$ remains unchanged, and the drift term is given by the gradient of the log-posterior $\log p^{*} (\bth_{\in \bbc} | \bth_{\notin \bbc} < \mathbf{0} , \bbX,\bbYs)$. Applying Theorem 1 to this new SDE, we now confirm that the tempered and conditioned posterior of the dense network $p^{*} (\bth_{\in \bbc} | \bth_{\notin \bbc} < \mathbf{0})$ is left invariant by the SDE evolving the sparse network. As $\mathcal{T}_{\bth}(\bth_{\in \bbc} | \bth'_{\notin \bbc}, \bbc)$ is the integration for a given $\Delta t$ of this SDE, it also leaves $p^{*} (\bth_{\in \bbc} | \bth_{\notin \bbc} < \mathbf{0})$ invariant. This yields
\begin{eqnarray}
\int_{\bth'_{\in \bbc}} \mathcal{T}_{\bth}(\bth_{\in \bbc} | \bth'_{\in \bbc}, \bbc) p^{*}(\bth'_{\in \bbc} | \bth'_{\notin \bbc} < \mathbf{0}) d \bth'_{\in \bbc}
& = & p^{*}(\bth_{\in \bbc} | \bth_{\notin \bbc} < \mathbf{0})
\end{eqnarray}
As we simplified both integrals we arrived at
\begin{eqnarray}
\int_{\bth'} \mathcal{T}_{\bth}(\bth | \bth', \bbc) p^{*}(\bth' | \bbc) d \bth'
& = &  \frac{1}{\mathcal{L}(\bbc)} p^{*}(\bth_{\in \bbc} | \bth_{\notin \bbc} < \mathbf{0}) p^{*}(\bth_{\notin \bbc} , \bth_{\notin \bbc}<\mathbf{0})
\end{eqnarray}
Replacing the right-end side with equation \eqref{eq:cond-prob-form} we conclude
\begin{eqnarray}
\int_{\bth'} \mathcal{T}_{\bth}(\bth | \bth', \bbc) p^{*}(\bth' | \bbc) d \bth'
& = & p^{*}(\bth | \bbc)
\end{eqnarray}
\end{proof}

We now show that the normalization constant $\mathcal{L}(\bbc)$ is equal to $p^{*}(\bth_{\notin \bbc} < \mathbf{0})$.
\begin{proof}
Using equation \eqref{eq:cond-prob-form}, as $p^{*}(\theta|\bbc)$ normalizes to 1 the normalization constant is equal to
\begin{eqnarray}
 \mathcal{L}(\bbc) & = & \int_{\bth} p^{*}(\bth_{\in \bbc} | \bth_{\notin \bbc} < \mathbf{0}) p^{*}(\bth_{\notin \bbc} , \bth_{\notin \bbc}<\mathbf{0}) d \bth
\end{eqnarray}
By factorizing the last factor in the integral we have that
\begin{eqnarray}
p^{*}(\bth,\bth_{\notin \bbc} < \mathbf{0})
& = & p^{*}(\bth_{\in \bbc}|\bth_{\notin \bbc}<\mathbf{0}) p^{*}(\bth_{\notin \bbc}|\bth_{\notin \bbc} < \mathbf{0}) p^{*}(\bth_{\notin \bbc} < \mathbf{0}) 
\end{eqnarray}
The last term does not depend on the value $\bth$ because $\bth_{\notin \bbc}$ refers here to the random variable and the first two term depend either on $\bth_{\in \bbc}$ or $\bth_{\notin \bbc}$. Plugging the previous equation into the computation of $\mathcal{L}(\bbc)$ and separating the integrals we have
\begin{eqnarray}
 \mathcal{L}(\bbc)
 & = &
 p^{*}(\bth_{\notin \bbc} < \mathbf{0})
 \underbrace{\int_{\bth_{\in \bbc}} p^{*}(\bth_{\in \bbc} | \bth_{\notin \bbc} < \mathbf{0})  d \bth_{\in \bbc}}_{=1}
 \underbrace{\int_{\bth_{\notin \bbc}} p^{*}(\bth_{\notin \bbc} | \bth_{\notin \bbc} < \mathbf{0})  d \bth_{\notin \bbc}}_{=1}
\end{eqnarray}
\end{proof}

Due to Lemma 1, there exists a distribution $ \pi(\bth\,|\,\bbc)$ of the following form which is left invariant by the operator $\mathcal{T}_{\bth}$
\begin{equation}
  \pi(\bth\,|\,\bbc) \;=\; \frac{1}{\mathcal{L}(\bbc)} \pi(\bth) \, \mathcal{C}(\bth,\bbc) \;,
  \label{eqn:meth-constrained-statdist}
\end{equation}
where $\mathcal{L}(\bbc)$ is a normalizer and where $\pi(\bth)$ is some distribution over $\bth$ that may not obey the constraint $\mathcal{C}(\bth,\bbc)$. This will imply a very strong property on the compound operator which evolves both $\bth$ and $\bbc$. To form $\mathcal{T}$ the operators $\mathcal{T}_{\bth}$ and $\mathcal{T}_{\bbc}$ are performed one after the other so that the total update can be written in terms of the compound operator
\begin{equation}
\mathcal{T}(\bth , \bbc | \bth',\bbc') = \mathcal{T}_{\bbc}(\bbc | \bth)\mathcal{T}_{\bth}(\bth | \bth', \bbc')\;.
 \label{eqn:meth-deepr-tranop-total}
\end{equation}
Applying the compound operator $\mathcal{T}$ given by Eq.~\eqref{eqn:meth-deepr-tranop-total} corresponds to advancing the parameters for a single iteration of Algorithm~\ref{alg:deepr-formal}.

Using these definitions a general theorem can be enunciated for arbitrary distributions $\pi(\bth\,|\,\bbc)$ of the form \eqref{eqn:meth-constrained-statdist}.
The following theorem states that the distribution of variable pairs $\bbc$ and $\bth$ that is left stationary by the operator $\mathcal{T}$ is the product of Eq.~\eqref{eqn:meth-constrained-statdist} and a uniform prior $p_{\mathcal{C}}(\bbc)$ over the constraint vectors which have $K$ active connections. This prior is formally defined as
\begin{equation}
  p_{\mathcal{C}}(\bbc) \;=\; \frac{1}{|\bbchi|} \sum_{\bbxi \in \bbchi} \delta(\bbc - \bbxi)\;,
  \label{eqn:meth-constraint-prior}
\end{equation}
with $\bbchi$ as defined in \eqref{equ:meth-mu}. The theorem to analyze the dynamics of Algorithm~\ref{alg:deepr-formal} can now be written as
\begin{thm}
\label{lem:deepr_stationnary}
Let $\mathcal{T}_{\bth}(\bth|\bth',\bbc)$ be the transition operator of a Markov chain over $\bth$ and let $\mathcal{T}_{\bbc}(\bbc|\bth)$ be defined by Eq.~\eqref{eqn:meth-constraint-transop}. Under the assumption that $\mathcal{T}_{\bth}(\bth|\bth',\bbc)$ has a unique stationary distribution $\pi(\bth|\bbc)$, that verifies Eq. \eqref{eqn:meth-constrained-statdist}, then the Markov chain over $\bth$ and $\bbc$ with transition operator
\begin{equation}
  \mathcal{T}(\bth,\bbc|\bth',\bbc') = \mathcal{T}_{\bbc}(\bbc|\bth) \mathcal{T}_{\bth}(\bth|\bth',\bbc')
  \label{eqn:meth-total-transop}
\end{equation}
leaves the stationary distribution
\begin{equation}
  p^*(\bth,\bbc) = \frac{\mathcal{L}(\bbc)}{\sum_{\bbc' \in \mathcal{X}} \mathcal{L}(\bbc')} \pi(\bth|\bbc)p_{\mathcal{C}}(\bbc)
  \label{eqn:meth-total-statpdf}
\end{equation}
invariant. If the Markov chain of the transition operator $\mathcal{T}_{\bth}(\bth|\bth',\bbc')$ is ergodic, then the stationary distribution is also unique.
\end{thm}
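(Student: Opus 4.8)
The plan is to establish invariance of the claimed distribution \eqref{eqn:meth-total-statpdf} under the compound operator $\mathcal{T} = \mathcal{T}_{\bbc}\mathcal{T}_{\bth}$ by direct substitution, and then argue uniqueness separately from ergodicity. First I would record the structural simplification that makes the computation go through: upon substituting $\pi(\bth|\bbc) = \frac{1}{\mathcal{L}(\bbc)}\pi(\bth)\mathcal{C}(\bth,\bbc)$ from \eqref{eqn:meth-constrained-statdist} and the uniform prior $p_{\mathcal{C}}$ from \eqref{eqn:meth-constraint-prior} into \eqref{eqn:meth-total-statpdf}, the per-$\bbc$ normalizer $\mathcal{L}(\bbc)$ cancels, so that, up to a global constant, $p^*(\bth,\bbc) \propto \pi(\bth)\,\mathcal{C}(\bth,\bbc)$ supported on $\bbc \in \bbchi$. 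Keeping this product form in view clarifies which factors must reappear after applying $\mathcal{T}$.

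The core of the proof is the invariance identity
\begin{equation*}
\sum_{\bbc'}\int_{\bth'}\mathcal{T}_{\bbc}(\bbc|\bth)\,\mathcal{T}_{\bth}(\bth|\bth',\bbc')\,p^*(\bth',\bbc')\,d\bth' \;=\; p^*(\bth,\bbc)\;,
\end{equation*}
which I would carry out in two stages matching the two operators. Since $\mathcal{T}_{\bbc}(\bbc|\bth)$ depends on neither $\bth'$ nor $\bbc'$, it factors out of the sum and integral. For each fixed $\bbc'$, the hypothesis that $\pi(\cdot|\bbc')$ is the unique stationary distribution of $\mathcal{T}_{\bth}(\cdot|\cdot,\bbc')$ gives $\int_{\bth'}\mathcal{T}_{\bth}(\bth|\bth',\bbc')\,\pi(\bth'|\bbc')\,d\bth' = \pi(\bth|\bbc')$. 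Writing $Z=\sum_{\bbc'}\mathcal{L}(\bbc')$ for the global normalizer in \eqref{eqn:meth-total-statpdf} and substituting the product form of $p^*(\bth',\bbc')$, the identity $\mathcal{L}(\bbc')\,\pi(\bth|\bbc') = \pi(\bth)\,\mathcal{C}(\bth,\bbc')$ collapses the $\bbc'$-sum to $\frac{\pi(\bth)}{Z|\bbchi|}\sum_{\bbc'\in\bbchi}\mathcal{C}(\bth,\bbc')$, which by the counting identity \eqref{equ:meth-mu} equals $\frac{\pi(\bth)\,\mu(\bth)}{Z|\bbchi|}$.

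The second stage is to reapply the connectivity operator. Writing $\mathcal{T}_{\bbc}(\bbc|\bth) = \frac{1}{\mu(\bth)}\mathcal{C}(\bth,\bbc)\sum_{\bbxi\in\bbchi}\delta(\bbc-\bbxi)$ from \eqref{eqn:meth-constraint-transop}, the factor $\frac{1}{\mu(\bth)}$ exactly cancels the $\mu(\bth)$ produced in the first stage, leaving $\frac{1}{Z|\bbchi|}\pi(\bth)\,\mathcal{C}(\bth,\bbc)$ supported on $\bbchi$, which is precisely $p^*(\bth,\bbc)$. The two cancellations --- $\mathcal{L}(\bbc')$ against the normalizer of $\pi(\cdot|\bbc')$, and $\mu(\bth)$ against the uniform reselection probability in $\mathcal{T}_{\bbc}$ --- are the crux, and they hinge essentially on the special product form \eqref{eqn:meth-constrained-statdist} of the $\mathcal{T}_{\bth}$-stationary distribution. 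Normalization is then a scale-independent afterthought, since invariance holds regardless of the overall constant.

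For uniqueness, I would argue that ergodicity of $\mathcal{T}_{\bth}$ lifts to the compound chain \eqref{eqn:meth-total-transop}: the $\mathcal{T}_{\bth}$-step makes the continuous coordinate $\bth$ irreducible and aperiodic within any fixed admissible constraint, while the $\mathcal{T}_{\bbc}$-step assigns positive probability to every $\bbc\in\bbchi$ compatible with the current $\bth$, so that any two configurations in the support of $p^*$ communicate. Hence $\mathcal{T}$ is ergodic and its stationary distribution is unique. I expect the main obstacle to be not the algebra but the careful bookkeeping of supports: ensuring that the indicator $\mathcal{C}(\bth,\bbc)$ and the restriction to $\bbchi$ are propagated consistently through both stages, that $\mathcal{T}_{\bbc}$ is well defined (i.e.\ $\mu(\bth)\ge 1$, which holds because $\mathcal{T}_{\bth}$ only removes active connections and never adds them, keeping $|\bth\ge 0|\le K$), and that the operator order --- advance $\bth$ under the old constraint $\bbc'$, then resample $\bbc$ from the new $\bth$ --- is respected throughout.
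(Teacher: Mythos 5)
Your proposal is correct and follows essentially the same route as the paper's proof: factor $\mathcal{T}_{\bbc}$ out of the sum and integral, invoke per-$\bbc'$ stationarity of $\pi(\cdot|\bbc')$, collapse the $\bbc'$-sum via $\mathcal{L}(\bbc')\,\pi(\bth|\bbc') = \pi(\bth)\,\mathcal{C}(\bth,\bbc')$ and the counting identity $\sum_{\bbc'\in\bbchi}\mathcal{C}(\bth,\bbc') = \mu(\bth)$, cancel $\mu(\bth)$ against the $\tfrac{1}{\mu(\bth)}$ in $\mathcal{T}_{\bbc}$, and obtain uniqueness by lifting ergodicity of $\mathcal{T}_{\bth}$ to irreducibility of the compound chain. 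Your explicit remark that $p^*(\bth,\bbc)\propto\pi(\bth)\,\mathcal{C}(\bth,\bbc)$ on $\bbchi$, and your check that $\mu(\bth)\ge 1$ (since the $\bth$-update never activates connections, so $|\bth\ge 0|\le K$), are small clarifications the paper leaves implicit, not a different argument.
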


\begin{proof}
Theorem~\ref{lem:deepr_stationnary} holds for $\mathcal{T}_{\bbc}$ in combination with any operator $\mathcal{T}_{\bth}$ that updates $\bth$ that can be written in the form \eqref{eqn:meth-constrained-statdist}.
We prove Theorem~\ref{lem:deepr_stationnary} by proving the following equality to show that $\mathcal{T}$ leaves \eqref{eqn:meth-total-statpdf} invariant:
\begin{equation}
  \sum_{\bbc'} \int \mathcal{T}(\bth,\bbc|\bth',\bbc') \, p^*(\bth',\bbc') \, \d \bth'  \;=\;
  p^*(\bth,\bbc)\;.
  \label{eqn:meth-stationary-distribution-constraint-dt2}
\end{equation}
We expand the left-hand term using Eq.~\eqref{eqn:meth-total-transop} and Eq.~\eqref{eqn:meth-total-statpdf}
\begin{gather}
  \sum_{\bbc'} \int \mathcal{T}(\bth,\bbc|\bth',\bbc') \, p^*(\bth',\bbc') \, \d \bth' \;=\; \nonumber \\
  \sum_{\bbc'} \int \mathcal{T}_{\bbc}(\bbc|\bth) \mathcal{T}_{\bth}(\bth|\bth',\bbc') \, \frac{\mathcal{L}(\bbc')}{\sum_{\bbc'' \in \mathcal{X}} \mathcal{L}(\bbc'')} \, \pi(\bth'|\bbc')p_{\mathcal{C}}(\bbc') \, \d \bth'\;.
  \label{eqn:meth-stationary-distribution-constraint-dt3}
\end{gather}
Since $\mathcal{T}_{\bbc}$ does not depend on $\bth'$ and $\bbc'$, one can pull it out of the sum and integral and then marginalize over $\bth'$ by observing that $\pi(\bth|\bbc')$ is by definition the stationary distribution of $\mathcal{T}_{\bth}(\bth | \bth', \bbc')$:
\begin{align}
  \sum_{\bbc'} \int \mathcal{T}(\bth,\bbc|\bth',\bbc') & \, p^*(\bth',\bbc') \, \d \bth' \;=\; \label{eqn:meth-stationary-distribution-constraint-dt4} \\
  \;&=\; \mathcal{T}_{\bbc}(\bbc|\bth) \sum_{\bbc'} \int \mathcal{T}_{\bth}(\bth|\bth',\bbc') \, \frac{\mathcal{L}(\bbc')}{\sum_{\bbc'' \in \mathcal{X}} \mathcal{L}(\bbc'')} \, \pi(\bth'|\bbc')p_{\mathcal{C}}(\bbc') \, \d \bth' \\
  \;&=\; \mathcal{T}_{\bbc}(\bbc|\bth) \, \sum_{\bbc'} \frac{\mathcal{L}(\bbc')}{\sum_{\bbc'' \in \mathcal{X}} \mathcal{L}(\bbc'')} \pi(\bth|\bbc')p_{\mathcal{C}}(\bbc')\;.
\end{align}
What remains to be done is to marginalize over $\bbc'$ and to relate the result to the stationary distribution $p^*(\bth,\bbc) = \frac{\mathcal{L}(\bbc)}{\sum_{\bbc' \in \mathcal{X}} \mathcal{L}(\bbc')} \pi(\bth|\bbc)p_{\mathcal{C}}(\bbc)$. First we replace $\mathcal{T}_{\bbc}$ with its definition Eq.~\eqref{eqn:meth-constraint-transop}:
\begin{align*}
   \sum_{\bbc'} \int & \mathcal{T}(\bth,\bbc|\bth',\bbc') \, p^*(\bth',\bbc') \, \d \bth'  \;=\; \\
   &=\; \left( \frac{1}{\mu(\bth)} \sum_{\bbxi \in \bbchi} \delta(\bbc - \bbxi ) \right) \, 
\mathcal{C}(\bth,\bbc) \, \sum_{\bbc'} \frac{\mathcal{L}(\bbc')}{\sum_{\bbc'' \in \mathcal{X}} \mathcal{L}(\bbc'')} \pi(\bth|\bbc')p_{\mathcal{C}}(\bbc')
\end{align*}
As the operator $\mathcal{T_{\bbc}}$ samples uniform across admissible configurations it has a close relationship with the uniform probability distribution $p_{\mathcal{C}}$ and we can now replace the sum over $\bbxi$ using Eq.~\eqref{eqn:meth-constraint-prior}
\begin{align*}
  \sum_{\bbc'} \int \mathcal{T}(\bth,\bbc|\bth',\bbc') \, p^*(\bth',\bbc') \, \d \bth'  \;&=\; \frac{|\bbchi|}{\mu(\bth)} p_{\mathcal{C}}(\bbc) \, \mathcal{C}(\bth,\bbc) \, \sum_{\bbc'} \frac{\mathcal{L}(\bbc')}{\sum_{\bbc'' \in \mathcal{X}} \mathcal{L}(\bbc'')} \pi(\bth|\bbc')p_{\mathcal{C}}(\bbc')\;.
\end{align*}
From Eq.~\eqref{equ:meth-mu}, Eq.~\eqref{eqn:meth-constrained-statdist} and Eq.~\eqref{eqn:meth-constraint-prior} we find the equalities $\sum_{\bbc'} \mathcal{L}(\bbc') \pi(\bth|\bbc') p_{\mathcal{C}}(\bbc') = \pi(\bth) \sum_{\bbc'} \mathcal{C}(\bth,\bbc') p_{\mathcal{C}}(\bbc') = \frac{\mu(\bth)}{|\bbchi|} \pi(\bth)$. Using this we get
\begin{align*}
  \sum_{\bbc'} \int \mathcal{T}(\bth,\bbc|\bth',\bbc') \, p^*(\bth',\bbc') \, \d \bth' \;&=\;
  \frac{|\bbchi|}{\mu(\bth)} p_{\mathcal{C}}(\bbc) \, \mathcal{C}(\bth,\bbc) \, \frac{1}{\sum_{\bbc' \in \mathcal{X}} \mathcal{L}(\bbc')} \frac{\mu(\bth)}{|\bbchi|} \pi(\bth)
\end{align*}
Finally using again Eq.~\eqref{eqn:meth-constrained-statdist}, i.e. $\pi(\bth)\,\mathcal{C}(\bth,\bbc) = \mathcal{L}(\bbc) \pi(\bth|\bbc)$
\begin{align*}
  \sum_{\bbc'} \int \mathcal{T}(\bth,\bbc|\bth',\bbc') \, p^*(\bth',\bbc') \, \d \bth' \;&=\;
  \frac{\mathcal{L}(\bbc)}{\sum_{\bbc' \in \mathcal{X}} \mathcal{L}(\bbc')} \pi(\bth|\bbc) \, p_{\mathcal{C}}(\bbc) \;=\; p^*(\bth,\bbc)\;.
\end{align*}
This shows that the stationary distribution Eq.~\eqref{eqn:meth-total-statpdf} is invariant under the compound operator \eqref{eqn:meth-total-transop}. Under the assumption that $\mathcal{T}_{\bth}(\bth|\bth',\bbc')$ is ergodic it allows each parameter $\theta_k$ to become negative with non-zero probability and the stationary distribution is also unique. This can be seen by noting that under this assumption each connection will become dormant sooner or later and thus each state in $\bbc$ can be reached from any other state $\bbc'$. The Markov chain is therefore irreducible and the stationary distribution is unique.
\end{proof}

Lemma~\ref{lem:cond-dist-deepr} provides for the case of algorithm \ref{alg:deepr-formal} the existence of an invariant distribution that is needed to apply Theorem~2. We conclude that the distribution $p^{*}(\bth,\bbc)$ defined by plugging the result of Lemma~\ref{lem:cond-dist-deepr} Eq. \eqref{eqn:cond-lem} into the result of Theorem 2 Eq. \eqref{eqn:meth-total-statpdf}, is left invariant by algorithm \ref{alg:deepr-formal} and it is written
\begin{equation}
\boxed{
p^{*}(\bth,\bbc) = \frac{p^{*}(\bth_{\notin \bbc} < \mathbf{0})}{\sum_{\bbc' \in \mathcal{X}} p^{*}(\bth_{\notin \bbc'} < \mathbf{0})} p^{*}(\bth|\bbc) p_{\mathcal{C}}(\bbc)}
\label{eqn:meth-joint-deepr}
\end{equation}
Where $p^{*}(\bth)$ is defined as previously as the tempered posterior of the dense network which is left invariant by soft-DEEP R according to Theorem 1. The prior $p_{\mathcal{C}}(\bbc)$ in Eq.~\eqref{eqn:meth-joint-deepr} assures that only constraints $\bbc$ with exactly $K$ active connections are selected. By Theorem~\ref{lem:deepr_stationnary} the stationary distribution \eqref{eqn:meth-joint-deepr} is also unique.
By inserting the result of Lemma~\ref{lem:cond-dist-deepr}, Eq.~\eqref{eqn:cond-lem} we recover Eq.~\ref{eqn:stat-dist-dr} of the main text.

Interestingly, by marginalizing over $\bth$, we can show that the network architecture identified by $\bbc$ is sampled by algorithm \ref{alg:deepr-formal} from the probability distribution
\begin{equation}
\boxed{p^{*}(\bbc) = \frac{p^{*}(\bth_{\notin \bbc} < \mathbf{0})}{\sum_{\bbc' \in \mathcal{X}} p^{*}(\bth_{\notin \bbc'} < \mathbf{0})} p_{\mathcal{C}}(\bbc)}
\end{equation}

The difference between the formal algorithm \ref{alg:deepr-formal} and the actual implementation of \deepr{} are that $\mathcal{T}_{\bbc}$ keeps the dormant connection parameters constant, whereas in \deepr{} we implement this by setting connections to 0 as they become negative. We found the process used in \deepr{} works very well in practice.
The reason why we did not implement algorithm \ref{alg:deepr-formal} in practice is that we did not want to consume memory by storing any parameter for dormant connections. 
This difference is obsolete from the view point of the network function for a given $(\bth,\bbc)$ pair because nor negative neither strictly zero $\theta$ have any influence on the network function.

This difference might seem problematic to consider that the properties of convergence to a specific stationary distribution as proven for algorithm  \ref{alg:deepr-formal} extends to \deepr{}. However, both the theorem and the implementation are rather unspecific regarding the choice of the prior on the negative sides $\theta <0$. We believe that, with good choices of priors on the negative side, the conceptual and quantitative difference between the distribution explored by \ref{alg:deepr-formal} and \deepr{} are minor, and in general, algorithm \ref{alg:deepr-formal} is a decent mathematical formalization of \deepr{} for the purpose of this paper.

\end{document}